\newtheorem{theorem}{Theorem}[section]
\newtheorem{prop}{Proposition}[subsection]
\newtheorem{remark}{Remark}[subsection]
\newtheorem{example}{Example}[section]
\numberwithin{equation}{section}
\title{How Can Deep Neural Networks Fail Even With Global Optima?}
\author[]{Qingguang Guan \thanks{Correspondence to: \texttt{qingguang.guan@usm.edu}}}
\affil[]{School of Mathematics and Natural Sciences\\ 
	University of Southern Mississippi\\
	118 College Drive, Hattiesburg, MS, 39406}
\date{}
\begin{document}
	\maketitle
	
\begin{abstract}
	Fully connected deep neural networks are successfully applied to classification and function approximation problems. By minimizing the cost function, i.e., finding the proper weights and biases, models can be built for accurate predictions. The ideal optimization process can achieve global optima. However, do global optima always perform well? If not, how bad can it be? In this work, we aim to: 1) extend the expressive power of shallow neural networks to networks of any depth using a simple trick, 2) construct extremely overfitting deep neural networks that, despite having global optima, still fail to perform well on classification and function approximation problems. Different types of activation functions are considered, including ReLU, Parametric ReLU, and Sigmoid functions. Extensive theoretical analysis has been conducted, ranging from one-dimensional models to models of any dimensionality. Numerical results illustrate our theoretical findings. 
\end{abstract}
	
\noindent \textbf{Keywords:}{ Deep Neural Network, Global Optima, Binary Classification, Function Approximation, Overfitting.}\\


	\section{Introduction}
	Fully connected deep neural networks are the fundamental components of modern deep learning architectures, serving as the building blocks for various models like convolutional neural networks \cite{alexnet}, transformers \cite{17attention}, and numerous others. The effectiveness of deep neural networks lies in their ability to approximate complex functions, making them essential tools for tasks ranging from image recognition to natural language processing. However, along with their expressive power, deep neural networks also exhibit a phenomenon known as overfitting, where they may fit the training data very well instead of capturing the underlying patterns. This underscores the importance of understanding both the approximation capabilities and the limitations of deep neural networks.
	Since neural network models are obtained through training, which involves optimizing a cost function. The ultimate goal is to find the global optima, which represent configurations of the network parameters that minimize the discrepancy between the predicted outputs and the actual targets. However, achieving global optima does not guarantee optimal performance, as the network may still suffer from overfitting or other issues. Therefore, it is crucial to thoroughly examine the properties of global optima to understand how they affect the performance of the model.
	
	In this paper, we will focus on the regression problem formulated as scalar-valued function approximation. Let the target be a scalar-valued function $g({\bf x})$ (in the case of binary classification, $g({\bf x})$ has values $1$ and $-1$). The variable is ${\bf x} \in \mathbb{R}^d,$ where $d$ is a positive integer. The training set is defined as
	$$\Big\{{\bf x}_l,y_l\Big\}_{l=1}^{\mathbb{L}},$$ 
	where $y_l=g({\bf x}_l)$, and ${\bf x}_1, {\bf x}_2, ...,{\bf x}_\mathbb{L}$ are samples drawn from a uniform distribution in a $d$-dimensional cube $[0,1]^d$. Thus, the input layer has $d$ neurons, and the output layer has one neuron. Suppose there are $K$ hidden layers in the network. We define the output as a function $f_{K}({\bf x})$, omitting the parameters of weights and biases in the function definition. The cost functions are Median Absolute Error (MAE error) defined in equation \eqref{mae}  or Mean Squared Error (MSE error) defined in equation \eqref{mse}:
	\begin{align}
		&C_{mae}(W,B) = \frac{1}{\mathbb{L}}\sum_{l=1}^\mathbb{L} \Big|g({\bf x}_l)-f_K({\bf x}_l)\Big|,   \label{mae} \\
		&C_{mse}(W,B) = \frac{1}{\mathbb{L}}\sum_{l=1}^\mathbb{L} \Big(g({\bf x}_l)-f_K({\bf x}_l)\Big)^2, \label{mse} 
	\end{align}
	where $W,B$ are weights and biases of $f_K({\bf x})$. From \eqref{mae} and \eqref{mse}, we know
	$C_{mae}(W,B)\geq 0$ and $C_{mse}(W,B)\geq 0.$ If there exist $W^*$ and $B^*$ such that $C_{mae}(W^*,B^*) = 0$ or $C_{mse}(W^*,B^*) = 0,$ then $(W^*,B^*)$ is a global minimizer for the corresponding cost function.  
	
	For properly designed binary classification and function approximation problems, we can construct neural networks of any depth that fit the training data perfectly, achieving global optima and zero training loss. However, those neural networks have the worst generalization error. The extreme case is that the model only works on the training set; for any data not in the training set, the output of the model is meaningless.
	
	The paper is organized as follows: In Section \ref{trick}, we propose a simple trick to extend the universal approximation of shallow networks to deep neural networks of any depth. Various activation functions are considered. In Section \ref{sec3}, we construct examples of binary classification and function approximation in one, two, and high dimensions for networks with ReLU activation functions.
	Section \ref{sec4} is devoted to deep neural networks with Parametric ReLU activation functions. The constructions are slightly different compared to the ReLU function. In Section \ref{sec5}, we only consider function approximation problems for networks with Sigmoid activation functions. Conclusions are drawn in Section \ref{sec6}.
	
	\section{A Simple Trick to Extend the Expressivity of Shallow Neural Networks to Any Depth}\label{trick}
	
	The approximation properties of shallow neural networks have been extensively studied, including universal approximation \cite{cybenko1989approximation,park1991universal,hornik1993some, barron1993universal,WeinanEApriori,ma2022uniform}, and higher order estimations \cite{siegel2022high,siegel2022sharp}. However, extending these existing results to any depth is either too complicated or requires many neurons in the subsequent hidden layers, see \cite{hornik1991approximation,DY17,JX18,siegel2023optimal, kidger2020universal}. Before presenting examples that can cause deep neural networks (DNNs) to fail, we employ a very simple trick to extend the expressive power of shallow neural networks to networks of any depth, the minimum width of the following attached hidden layers can be as small as one. The activation functions are assumed to be ReLU-like \cite{nair2010rectified, clevert2015fast, klambauer2017self}, which have a linear part $x$, if $x\geq 0$; or $C^2$ continuous with bounded second order derivatives, such as Sigmoid, Tanh, Softplus \cite{glorot2011deep}, Gaussian and RBFs \cite{park1991universal}.
	
	\begin{theorem}\label{th2.1}
		Suppose a bounded scalar valued function $f({\bf x}), {\bf x}\in \mathbb{R}^d, d\geq 1$ can be approximated by a fully connected neural network with $\mathbb{K}$ hidden layers, $\mathbb{K}\geq 1$, then after attaching $\mathbb{N}$ extra hidden layers with any width $\geq 1$, the function can still be approximated by the deep neural network with $\mathbb{K}+\mathbb{N}$ hidden layers. $\mathbb{N}$ can be any positive integer.  	
	\end{theorem}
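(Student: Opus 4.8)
The plan is to leave the first $\mathbb{K}$ hidden layers of the given approximant untouched and to splice the $\mathbb{N}$ new layers in between the $\mathbb{K}$-th hidden layer and the output neuron. Since the original read-out is an affine combination $f_{\mathbb{K}}(\mathbf{x}) = \mathbf{w}_{\mathrm{out}}^\top \mathbf{a} + b_{\mathrm{out}}$ of the $\mathbb{K}$-th layer activation vector $\mathbf{a}$, the only quantity that must survive the new layers is the single scalar $z := f_{\mathbb{K}}(\mathbf{x})$, which is bounded on the compact domain $[0,1]^d$, say $|z|\le M$ (because $f$ is bounded and $f_{\mathbb{K}}$ approximates it). A layer of any width $\ge 1$ can transmit a scalar through a single ``carrier'' neuron while zeroing the outgoing weights of every other neuron, so it suffices to treat width exactly one. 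The task therefore reduces to building a width-one chain of $\mathbb{N}$ activated neurons, followed by an affine read-out, whose composition reproduces (or approximates) the identity on $[-M,M]$; I will carry this out separately for the two activation families named in the statement.

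For the ReLU-like case (activation equal to $x$ on $x\ge 0$, which also covers Parametric ReLU since it is the identity on the nonnegative axis) the plan is to shift every pre-activation into the positive half-line so the activation acts as the identity. The first carrier neuron computes $\sigma(\epsilon$-free$)=\sigma(\mathbf{w}_{\mathrm{out}}^\top \mathbf{a} + b_{\mathrm{out}} + M) = z + M$; each later carrier passes its input with weight $1$ and a bias chosen to keep the value nonnegative; the final affine read-out subtracts the accumulated shift and returns exactly $z$. Hence the extended network satisfies $f_{\mathbb{K}+\mathbb{N}}\equiv f_{\mathbb{K}}$ and the approximation error is literally unchanged — no error is introduced.

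For the smooth case ($C^2$ with bounded second derivative, e.g.\ Sigmoid, Tanh, Softplus, Gaussian, RBF) I will fix a point $t_0$ with $\sigma'(t_0)\ne 0$ (such a point exists for any nonconstant such $\sigma$) and exploit local near-linearity by encoding $z$ as a small perturbation $\epsilon z + t_0$. Taylor's theorem gives $\sigma(\epsilon z + t_0) = \sigma(t_0) + \sigma'(t_0)\,\epsilon z + \tfrac12\sigma''(\xi)\,\epsilon^2 z^2$; choosing every subsequent carrier's weight $1/\sigma'(t_0)$ and bias $t_0 - \sigma(t_0)/\sigma'(t_0)$ restores this encoding at the next layer, and a final affine read-out decodes a value $\approx z$. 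Letting $v_k$ denote the decoded value produced at layer $k$, the governing recursion is $v_{k+1} - v_k = \tfrac{\sigma''(\xi_k)}{2\sigma'(t_0)}\,\epsilon\, v_k^2$, so each layer contributes an $O(\epsilon)$ error. I expect the error analysis here to be the main obstacle: one must show these per-layer errors do not compound catastrophically across the composition of $\mathbb{N}$ layers. The plan is an a priori bootstrap — confine the decoded values to $[-M-1,\,M+1]$ and choose $\epsilon$ small enough that $\mathbb{N}\,c\,\epsilon < 1$ with $c := \|\sigma''\|_\infty (M+1)^2 / (2|\sigma'(t_0)|)$; this simultaneously keeps each $v_k$ inside the bound (so the bound is self-consistent) and forces $|v_{\mathbb{N}} - z| \le \mathbb{N}\,c\,\epsilon \to 0$. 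Consequently $f_{\mathbb{K}+\mathbb{N}}\to f_{\mathbb{K}}$ uniformly as $\epsilon\to 0$, so the $(\mathbb{K}+\mathbb{N})$-layer network approximates $f$ to within the original error plus an arbitrarily small term, which completes the argument for every positive integer $\mathbb{N}$.
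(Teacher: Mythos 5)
Your proposal is correct and takes essentially the same approach as the paper's proof: one carrier neuron per added layer carrying the encoding $\epsilon z + t_0$ at a point with $\sigma'(t_0)\neq 0$, re-encoding weights $1/\sigma'(t_0)$ and bias $t_0-\sigma(t_0)/\sigma'(t_0)$ (exactly the paper's \eqref{p1}--\eqref{fkn} with $c=t_0$), and a Taylor-expansion error analysis over the finitely many added layers, where your decoded variables $v_k$ are simply the paper's $(p_k-c)/\epsilon$ and your bootstrap bound $\mathbb{N}c\epsilon$ plays the role of the paper's constant $C_{\mathbb{N}}\epsilon$. Your ReLU-like case likewise mirrors the paper's: shift the value into the linear region so every added layer acts as the identity, giving $f_{\mathbb{K}+\mathbb{N}}=f_{\mathbb{K}}$ exactly.
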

	\begin{proof}
		Let $f_\mathbb{K}$ be the approximation of $f$ obtained by a neural network with $\mathbb{K}$ hidden layers. Suppose that by increasing the number of neurons and adjusting weights and biases, we can achieve $f_\mathbb{K} \rightarrow f$.
		
		Next, we will develop a method to construct $f_{\mathbb{K}+\mathbb{N}}$ such that it also approximates the function $f$. Suppose the activation function $a(\cdot)$ has a bounded second-order derivative and $a'(c) \neq 0$, where $c$ is a constant. Then at layer $\mathbb{K}+n, n\geq 1$, we select one neuron, and let its input be
		\begin{align}
			&p_1 = \epsilon f_{\mathbb{K}} +c, \label{p1}\\
			&p_n = \frac{1}{a'(c)}a(p_{n-1})-\frac{a(c)}{a'(c)}+c, \ n\geq 2, \label{pN}
		\end{align}
		where $\epsilon>0$ is a small enough number, the inputs for other neurons at layer $\mathbb{K}+n$ are set to zeros.  The output of hidden layer $\mathbb{K}+\mathbb{N}$ is set to
		\begin{equation}\label{fkn}
			f_{\mathbb{K}+\mathbb{N}} = \frac{1}{a'(c)\epsilon}a(p_\mathbb{N})-\frac{a(c)}{a'(c)\epsilon},
		\end{equation}
		and we have the estimation
		\begin{equation}\label{est1}
			\left|f_{\mathbb{K}+\mathbb{N}} - f_\mathbb{K} \right|\leq C\epsilon,
		\end{equation}
		where $C>0$ doesn't depend on $\epsilon$. 
		
		Since $a(\cdot)$ is a $C^2$ continuous function and its second order derivative is bounded. To prove \eqref{est1}, from 
		\eqref{p1}-\eqref{pN}, by Taylor's expansion
		$$
		a(p_{n-1}) = a(c)+a'(c)(p_{n-1}-c) + \frac{a''(\xi_{n-1})}{2}(p_{n-1}-c)^2,
		$$
		where $\xi_{n-1}$ is a value between $p_{n-1}$ and $c$, we have
		\begin{align}\label{pnpn-1}
			p_n = p_{n-1} +\frac{a''(\xi_{n-1})}{2a'(c)}(p_{n-1}-c)^2, \ n\geq 2,
		\end{align}
		where  $\left|{a''(\xi_{n-1})}/{2a'(c)}\right|\leq M$. Subtracting $c$ from both sides, we obtain
		\begin{align}\label{pn-c}
			|p_n -c |&\leq |p_{n-1}-c| +M|p_{n-1}-c|^2,\ 2\leq n\leq \mathbb{N}
		\end{align}
		Summing \eqref{pn-c} for $n=2,3,\cdots$, we have
		\begin{align}\label{pn-c-1}
			|p_n -c | &\leq |p_{1}-c| + M\sum\limits_{i=1}^{n-1}|p_{i}-c|^2,\ 2\leq n\leq \mathbb{N}
		\end{align}
		From \eqref{pn-c-1}, $|p_1-c|\leq |f_\mathbb{K}|\epsilon$, $f_\mathbb{K}$ is bounded and $\mathbb{N}$ is finite, we obtain 
		\begin{align}\label{pn-c-2}
			|p_n -c | &\leq C_\mathbb{N} \epsilon,\ 2\leq n\leq \mathbb{N}
		\end{align}
		where $C_\mathbb{N}$ is a constant only depends on $M, \mathbb{N}$ and the bound of $f_\mathbb{K}$. Then from \eqref{pnpn-1}, we have
		\begin{align}\label{pnpn-2}
			p_\mathbb{N} = p_{1} +\sum\limits_{i=1}^{\mathbb{N}-1}\frac{a''(\xi_{i})}{2a'(c)}(p_{i}-c)^2.
		\end{align}
		So that by \eqref{pn-c-2} and \eqref{pnpn-2}, we have 
		$$
		p_\mathbb{N}=p_1+ O(\epsilon^2),
		$$
		from \eqref{p1}, \eqref{fkn} and Taylor's expansion of $a(p_{\mathbb{N}})$, we have 
		\begin{align*}
			f_{\mathbb{K}+\mathbb{N}} 
			&= (p_\mathbb{N}-c)/\epsilon +O(\epsilon)\\ 
			&= (p_\mathbb{N}-p_1)/\epsilon +(p_1-c)/\epsilon +O(\epsilon)\\
			&= f_\mathbb{K} +O(\epsilon)
		\end{align*}
		which verifies \eqref{est1}.
		
		For ReLU-like activation functions, at layer $\mathbb{K}+n, n\geq 1$, we also select one neuron, let its input be the same as \eqref{p1}-\eqref{pN},  where $c>0$ is larger enough such that $p_1>0$. Then equation \eqref{pN} becomes
		\begin{align*}
			p_n 
			&= c+\frac{a(p_{n-1})-a(c)}{a'(c)}\\
			&= c+(p_{n-1}-c)/1 \\
			&= p_{n-1}
		\end{align*}
		where $2\leq n\leq \mathbb{N}$. So that we have 
		$$
		f_{\mathbb{K}+\mathbb{N}} = f_\mathbb{K},
		$$
		which concludes the proof.
	\end{proof}

	\section{Examples for Networks with ReLU Activation Functions}\label{sec3}
	\subsection{One-Dimensional Examples}\label{1de}
	We start with a one-dimensional input variable $x \in [0,1]$, and a fully connected neural network consisting of two hidden layers with ReLU activation functions, and one output neuron with the linear activation function. We define the width of the first hidden layer as $H_1$ and the width of the second hidden layer as $H_2$.  This network can be represented as a function $f_2(x)$. The cost function can be Median Absolute Error or Mean Squared Error. For classification and approximation problems, we divide the interval [0,1] into $N-1$ intervals with equal distances and collect $x_i = (i-1)/(N-1)$, $i=1,2,\cdots,N$. The pairs $\{x_i,y_i\}_{i=1}^{N}$ form the training  set, where $y_i$ can be labels or function values. In the following sections, the lower bounds of $H_1$ and $H_2$ are the minimal widths for hidden layer 1 and hidden layer 2 in the neural networks.

	\subsubsection{One-Dimensional Binary Classification}\label{1dclass}
	
	We define the training set as $\{x_i,y_i\}_{i=1}^{N}$, where $y_i = -1$, if $x_i<0.5$, and $y_i=1$ if $x_i \geq 0.5.$ Our goal is to obtain a model capable of predicting the label for any given $x \in [0,1]$. Ideally, for any $x < 0.5$, the prediction should be $-1$, and for any $x \geq 0.5$, the prediction should be $1$. Let $\text{dis}(x,\{x_i\}_{i=1}^{N})$ denote the distance between $x$ and the set $\{x_i\}_{i=1}^{N}$.
	
	\begin{prop}
		If $H_1\geq 2N+1$ and $H_2\geq 2$, there exist weights and biases such that the loss function is zero, i.e., the optimal global minimum is achieved. Meanwhile, for any $\epsilon > 0$ small enough, if $\text{dis}(x,\{x_i\}_{i=1}^{N}) > \epsilon$, then $f_{2}(x) = 0$, indicating that the classification of $x$ is neither 1 nor -1. It lies on the decision boundary and cannot be determined.
	\end{prop}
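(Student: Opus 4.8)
The plan is to exhibit an explicit overfitting network whose output is a sum of narrow, isolated spikes seated exactly on the training points. The guiding idea is to let the first hidden layer build a continuous piecewise-linear ``zigzag'' $g(x)$ that attains the label value $y_i$ at each $x_i$ and returns to $0$ at the midpoints between consecutive samples, and then to use the two neurons of the second hidden layer as thresholding units that carve out the spikes near the peaks of $g$ and of $-g$. The linear output neuron recombines these two branches with opposite signs so that the spikes inherit the correct label. The conceptual crux worth stating is that the second \text{ReLU} layer acts as a threshold converting the first layer's zigzag into isolated spikes, with its two neurons handling the $+1$ and $-1$ labels via $g$ and $-g$; everything else is bookkeeping.

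Concretely, I would first define $g$ as the piecewise-linear interpolant of the points $(x_1,y_1),(m_1,0),(x_2,y_2),(m_2,0),\dots,(x_N,y_N)$, where $m_i=(x_i+x_{i+1})/2$ and $y_i=\pm1$. Since $g$ has kinks only at the $N$ samples and the $N-1$ midpoints, it can be written as $g(x)=b_0+\sum_j w_j\,\text{ReLU}(x-t_j)$ using at most $2N-1$ \text{ReLU} units plus a bias, which fits comfortably inside $H_1\ge 2N+1$; the slack absorbs the initial-slope and endpoint bookkeeping. Because $g$ is itself an affine combination of the first-layer outputs, it is directly available as a pre-activation for the second layer.

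Next I would set the two second-layer neurons to compute $\text{ReLU}(g(x)-(1-\eta))$ and $\text{ReLU}(-g(x)-(1-\eta))$ for a small $\eta>0$ (their pre-activations are affine in the first-layer outputs), and define the output as $f_2(x)=\tfrac{1}{\eta}\big[\text{ReLU}(g(x)-(1-\eta))-\text{ReLU}(-g(x)-(1-\eta))\big]$. At a right-block sample $g(x_i)=1$, so the first branch equals $\eta$ while the second vanishes (as $-g<0<1-\eta$), giving $f_2(x_i)=1=y_i$; symmetrically $f_2(x_i)=-1=y_i$ at a left-block sample. Hence $C_{mae}=C_{mse}=0$ and a global minimum is attained. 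Note the two branches never fire simultaneously, since $g(x)>1-\eta>0$ and $-g(x)>1-\eta>0$ cannot both hold.

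Finally, for the decision-boundary claim I would quantify the spike width. Each extremum-to-midpoint segment of $g$ has slope magnitude $2(N-1)$, so $|g(x)|>1-\eta$ only within horizontal distance $\frac{\eta}{2(N-1)}$ of some $x_i$. Choosing $\eta=2(N-1)\epsilon$, which is admissible ($\eta<1$) precisely when $\epsilon$ is small enough, makes this neighborhood exactly $\epsilon$; thus $\text{dis}(x,\{x_i\})>\epsilon$ forces $|g(x)|\le 1-\eta$, both \text{ReLU} branches vanish, and $f_2(x)=0$. I expect the only delicate points to be verificational: confirming the width bound for $g$ including endpoint effects, and checking isolation uniformly, in particular across the single transition segment between the $-1$ block and the $+1$ block, where $g$ passes monotonically from $-1$ to $+1$ and so stays inside the band $|g|\le 1-\eta$ away from the two adjacent samples.
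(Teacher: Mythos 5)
Your construction is correct and is essentially the paper's proof in different clothing: your zigzag $g$ is exactly the signed sum $\sum_i y_i\phi_i$ of the paper's non-overlapping hat functions \eqref{1dhat}, and thresholding $g$ and $-g$ at level $1-\eta$ in the two second-layer neurons reproduces the paper's neurons $I_1,I_2$ in \eqref{1d1neuron}--\eqref{1d2neuron} with $b_1=b_2=1-\eta$, since the disjoint supports of the hats make the cross terms vanish identically. Your explicit width bound $\eta/\bigl(2(N-1)\bigr)$ with the choice $\eta=2(N-1)\epsilon$ is just the quantitative form of the paper's observation that the spikes' support shrinks to zero as $b\rightarrow 1$.
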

	
	\begin{proof}
		Next, we demonstrate how to build these networks.  Let $h >0$, as shown in \cite{DY17,JX18}, at $x_i$, a basis function can be constructed 
		\begin{equation}\label{1dhat}
			\phi_i(x) = \frac{1}{h}a(x-x_i +h) -\frac{2}{h}a(x-x_i)+\frac{1}{h}a(x-x_i-h),
		\end{equation}
		where $a(x)$ is the ReLU activation function, $\phi_i(x)$ has the height $1$, and compact support $[x_i-h,x_i+h]$. Figure \ref{1d1basis} is an example of $\phi_i(x)$ with $x_i=0.5$ and $h=1/10$. 
		
		To reduce the number of neurons in the hidden layers, let $h=\frac{1}{2(N-1)}$, we use ${2N+1}$ neurons in the first hidden layer,  each neuron has distinct input as: $x-x_i$, $x-x_i-h$ or $x-x_i+h$, $i=1,2,\cdots,N$. Then, with the fact $x-x_{i+1}+h = x-x_i-h$, using the output of first hidden layer, we can construct basis functions $\phi_i(x)$, where $i=1,2,\cdots,N$, each possessing non-overlapping compact support. 
		
		For the second hidden layer, we use $2$ neurons, the input of first neuron is:
		\begin{equation}\label{1d1neuron}
			I_1(x) := \left(\sum_{i\ {\rm for} \ x_i\geq 0.5} \phi_i(x) \right)-b_1,
		\end{equation}
		where $b_1$ is the bias. It's easy to see that $I_1(x)$ is the result of a linear combination of the output of the first hidden layer with $b_1$ subtracted. Similarly, we have the input of the second neuron:
		\begin{equation}\label{1d2neuron}
			I_2(x) := \left(\sum_{i\ {\rm for} \ x_i< 0.5} \phi_i(x) \right) -b_2.
		\end{equation}  
		
		Let $b_1,b_2 \in [0,1)$, then the final output is:
		$$
		f_2(x) := \frac{a(I_1(x))}{1-b_1}+(-1)\frac{a(I_2(x))}{1-b_2},
		$$ 
		which concludes the proof.
	\end{proof}
	\begin{remark}
		Let $b = b_1 = b_2 \in [0,1)$. Then, the measure of compact support for $f_2(x)$ on $[0,1]$ is $1-b$. Additionally, $f_2(x_i)$, where $i=1,2,\cdots,N$, corresponds exactly to the correct label.
	\end{remark}
	
	For a fixed $N$, the weights and biases of the model represent a global optimal solution for both MSE and MAE cost functions. Consequently, there exist infinitely many global optimal solutions due to the variability of $b_1, b_2$. Nevertheless, the model will fail to predict any point's label if the point lies outside a small region around any $x_i$.
	
	Let $N$ be 6, we show the structure of the proposed network in Figure \ref{fig1_N6}, where the green cubes are intermediate values. Figure \ref{1d6basis_b_0} shows the graph of $f_2(x)$ when $b_1=b_2=0$, while Figure \ref{1d6basis_b_09} shows $f_2(x)$ when $b_1=b_2=0.9$. As $b_1, b_2$ approach 1, $f_2(x)$ develops ``spikes".
	\begin{remark}
		For the cross-entropy cost function
		$$
		-\frac{1}{N}\sum\limits_{i=1}^{N} y_i\log\Big(f_2(x_i)\Big)+(1-y_i)\log\Big(1-f_2(x_i)\Big),
		$$
		we can set the final output as
		$$
		f_2(x) := \frac12\frac{a(I_1(x))}{1-b_1}-\frac12\frac{a(I_2(x))}{1-b_2}+\frac12.
		$$ 
		Figure \ref{1d6basis_b_095_entropy} shows $f_2(x)$ when $N=6$, $b_1=b_2=0.95$. 
	\end{remark}
	
	\begin{figure} 
		\centering
		\includegraphics[width=1.0\textwidth]{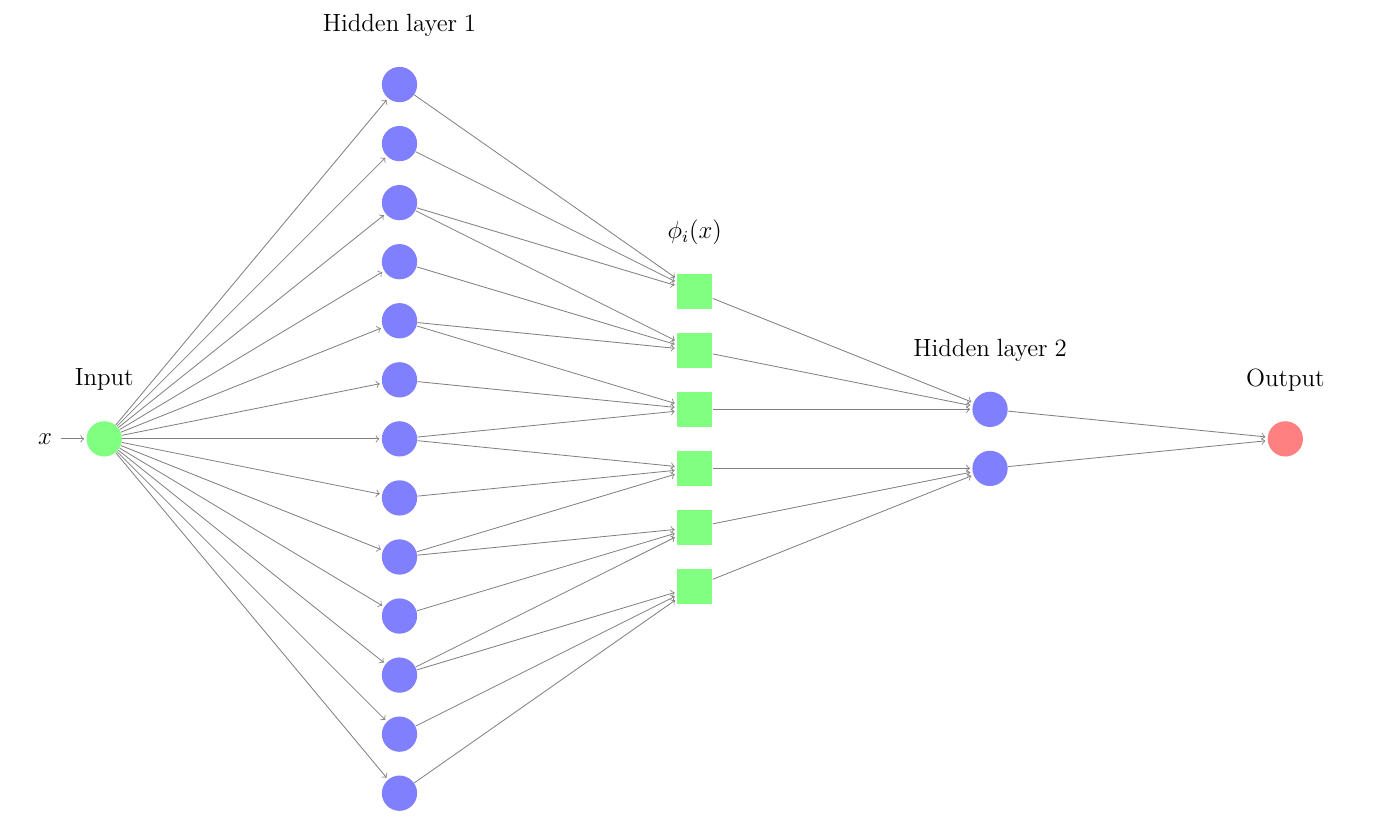}
		\caption{The network structure for 1-D binary classification when $N=6$}
		\label{fig1_N6}%
	\end{figure}
	
	\begin{figure} 
		\centering
		\subfigure[][]{%
			\label{1d1basis}%
			\includegraphics[width=0.35\linewidth]{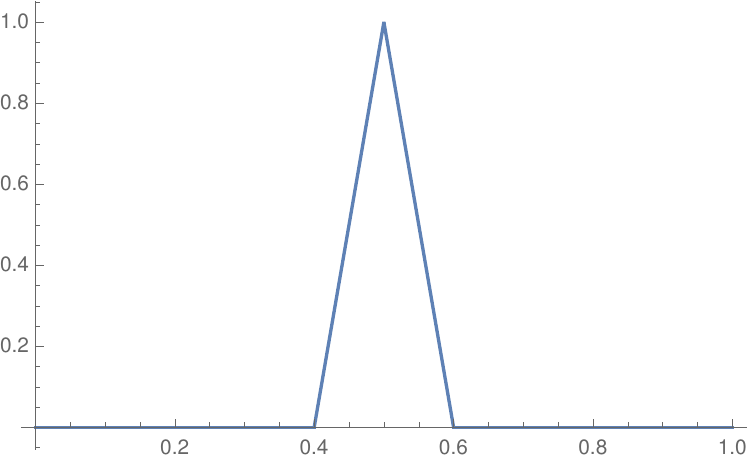}}%
		\hspace{8pt}%
		\subfigure[][]{%
			\label{1d6basis_b_0}%
			\includegraphics[width=0.35\linewidth]{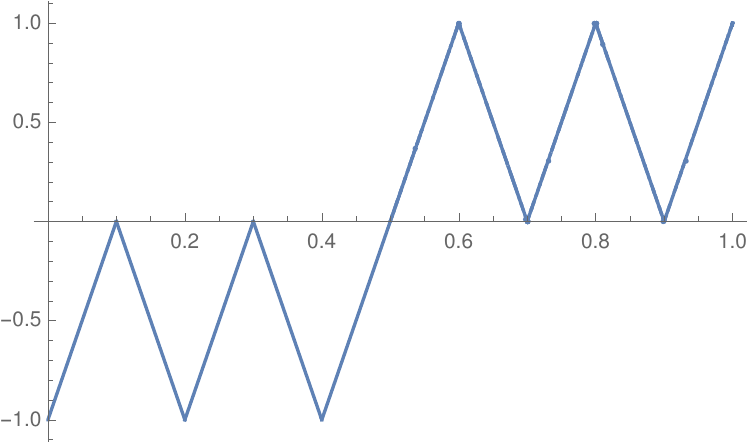}} \\
		\subfigure[][]{%
			\label{1d6basis_b_09}%
			\includegraphics[width=0.35\linewidth]{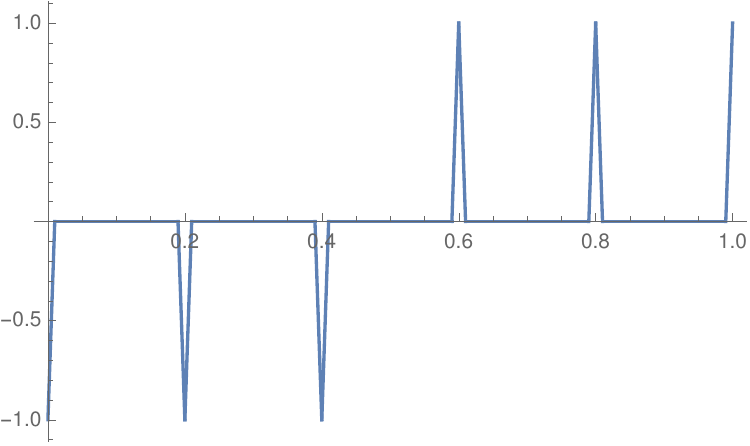}}%
		\hspace{8pt}%
		\subfigure[][]{%
			\label{1d6basis_b_095_entropy}%
			\includegraphics[width=0.35\linewidth]{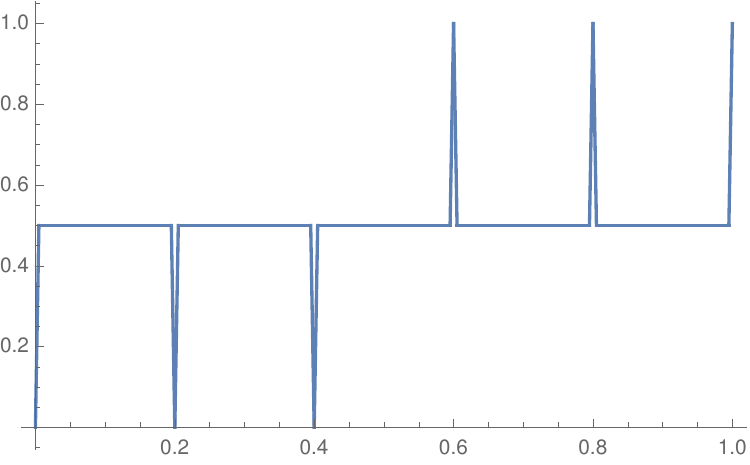}}%
		\caption[figures for 1-d]{
			\subref{1d1basis} The basis function $\phi_i(x)$, $x_i=0.5, h=0.1$;
			\subref{1d6basis_b_0} $f_2(x)$ when $N=6$, $b_1=b_2=0$;
			\subref{1d6basis_b_09} $f_2(x)$ when $N=6$, $b_1=b_2=0.9$; and,
			\subref{1d6basis_b_095_entropy} $f_2(x)$ for the cross-entropy cost function,  where $N=6$, $b_1=b_2=0.95$.}%
		\label{fig:1-d-1}%
	\end{figure}

	\subsubsection{One-Dimensional Function Approximation}\label{1dfunc}
	Suppose $g(x)$ is a continuous function for $x\in [0,1]$. To approximate $g(x)$ by the neural network, we define the training set $\{ x_i,y_i\}_{i=1}^{N},$ where $y_i=g(x_i)$. Ideally, $f_2(x)$ approaches $g(x)$ pointwise as the widths of the hidden layers increase. The specific ways to construct such networks are given in \cite{DY17,JX18}. However, if we go to another direction, even with global optima, the approximation can be very poor.
	
	\begin{prop}
		If $H_1\geq N+2$ and $H_2\geq N$,  there exist weights and biases such that the loss function is zero. Meanwhile, for any $\epsilon > 0$ small enough, if $\text{dis}(x,\{x_i\}_{i=1}^{N}) > \epsilon$, then $f_{2}(x) = 0$, indicating if $g(x)\not= 0$, the approximation is poor.
	\end{prop}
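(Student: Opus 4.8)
The plan is to mimic the classification construction but to shrink the supports of the building-block bumps to width $2\epsilon$ and to build one bump per training point, using the second hidden layer as the place where the bumps are formed. First I would fix $\epsilon$ smaller than the minimal grid spacing $1/(N-1)$ and aim to realize, at the $j$-th neuron of the second hidden layer, the narrow hat function
$$\phi_j(x) = a\!\left(1 - \frac{|x - x_j|}{\epsilon}\right),$$
which has height $1$ at $x_j$, support $[x_j-\epsilon,\,x_j+\epsilon]$, and satisfies $\phi_j(x_i) = \delta_{ij}$. The linear output neuron then returns $f_2(x) = \sum_{j=1}^N y_j \phi_j(x)$.

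The key step -- and the reason the first layer needs only about $N$ rather than $\approx 2N$ neurons -- is to express $|x-x_j|$ economically. Using the ReLU identity $|t| = 2a(t) - t$, I would write $|x - x_j| = 2a(x-x_j) - (x - x_j)$, so that the input to the $j$-th second-layer neuron,
$$I_j(x) = 1 - \frac{2}{\epsilon}\,a(x - x_j) + \frac{1}{\epsilon}\,x - \frac{x_j}{\epsilon},$$
is a linear combination of first-layer outputs plus a bias, provided the first layer supplies the single ramp $a(x-x_j)$ for each $j$ (that is $N$ neurons) together with the identity map $x$. On $[0,1]$ one has $x = a(x) = a(x-x_1)$, so $x$ is already available; to stay safely within the stated bound I would allot two spare first-layer units to reproduce $x$ (e.g.\ via $x = a(x+1)-1$), giving $H_1 = N+2$, while the bump-forming layer uses $H_2 = N$. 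Any extra neurons are set inactive.

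The remaining steps are routine verification. Zero loss follows from $\phi_j(x_i) = \delta_{ij}$: since $|x_i - x_j| \ge 1/(N-1) > \epsilon$ for $i \ne j$, the argument $1 - |x_i-x_j|/\epsilon$ is negative and $\phi_j(x_i) = 0$, while $\phi_j(x_j) = a(1) = 1$; hence $f_2(x_i) = y_i = g(x_i)$ and both $C_{mae}$ and $C_{mse}$ vanish, giving a global minimizer. The poor-approximation property is immediate from the supports: if $\text{dis}(x,\{x_i\}_{i=1}^N) > \epsilon$ then $|x - x_j| > \epsilon$ for every $j$, each argument is negative, and $f_2(x) = 0$; thus wherever $g(x) \ne 0$ away from the grid the pointwise error equals $|g(x)|$, and as $\epsilon \to 0$ the region on which $f_2$ is nonzero collapses onto the finite training set.

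I expect the only genuine obstacle to be the bookkeeping that keeps the first hidden layer at width $N+2$: the naive realization of each $|x-x_j|$ through $a(x-x_j) + a(x_j-x)$ would cost $2N$ neurons, and it is precisely the identity $|t| = 2a(t) - t$ together with the reuse of a single reconstruction of $x$ that avoids this. Everything else -- the choice $\epsilon < 1/(N-1)$, the delta property $\phi_j(x_i)=\delta_{ij}$, and the vanishing off the bumps -- is elementary once the hat functions $\phi_j$ are in place.
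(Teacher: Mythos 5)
Your construction is correct, but it takes a genuinely different route from the paper's. The paper builds, from the first hidden layer, the standard hat functions $\phi_i(x) = \frac{1}{h}a(x-x_i+h) - \frac{2}{h}a(x-x_i) + \frac{1}{h}a(x-x_i-h)$ whose width is locked to the grid spacing $h = 1/(N-1)$; its count $H_1 = N+2$ comes from the coincidence of shifts, $x - x_{i+1} = x - x_i - h$, so that consecutive hats share ramp neurons. The second hidden layer then truncates: $f_2(x) = \sum_{i=1}^N y_i\, a(\phi_i(x)-b)/(1-b)$ with $b \in [0,1)$, and it is $b \to 1$, not a shrinking hat width, that collapses the support onto the training points. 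You instead form the hats in the second layer with width $\epsilon$ directly, using the identity $|t| = 2a(t)-t$ so that each hat costs only one ramp $a(x-x_j)$ plus one shared reconstruction of the identity map; your overfitting dial is the hat width itself. Both arguments are valid and hit the same widths $(N+2, N)$. What the paper's version buys: a continuum of global minimizers on a single fixed architecture (the first-layer hats never change, only $b$ moves), and a mechanism --- sum of one-dimensional hats plus truncation at level $b$ --- that its higher-dimensional propositions reuse verbatim, where genuine multi-dimensional hats are unavailable. What yours buys: a more elementary, self-contained one-dimensional argument, an explicit support width $\epsilon$ rather than $(1-b)h$, and the halving trick $|t| = 2a(t)-t$ in place of the grid-coincidence trick. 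One small remark: since $x_1 = 0$, the ramp $a(x-x_1)$ already equals $x$ on $[0,1]$, so you could reuse it and drop to $H_1 = N$ (at the cost of the hat identity failing for $x<0$, which the proposition never needs); your two spare units are a safe but unnecessary margin.
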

	\begin{proof}
		Let $h$ be $1/(N-1)$ and $\phi_i(x)$ be the same as \eqref{1dhat}.  
		For the first hidden layer, we employ $N+2$ neurons. Each neuron has a distinct input, such as $x-x_i$, $x-x_i-h$, or $x-x_i+h$, where $i=1,2,\cdots,N$. Then, with the fact that $x-x_{i+1} = x-x_i-h$, using the output of the first hidden layer, we can build basis functions $\phi_i(x)$, $i=1,2,\cdots,N$, which have overlapping compact support.
		Then we use $N$ neurons in second hidden layer and the final output is:
		$$
		f_2(x) := \sum_{i=1}^{N}y_i\frac{a(\phi_i(x)-b)}{1-b},
		$$ 
		where $b\in [0,1)$. So as $b\rightarrow 1$, the approximation will fail though it is the global optimal solution. 
	\end{proof}
	Until now, the networks have only two hidden layers, how about deeper ones, can we still make them fail? 
	The answer is yes, we discuss the construction in the following proposition.
	\begin{prop}\label{deeper}
		Since $g(x)$ is bounded, so is $f_2(x)$,  we can add $\mathbb{M}$ hidden layers with ReLU activation functions and apply equations \eqref{p1} -\eqref{fkn}, with $\epsilon>0$ and large enough $c$, we obtain
		$$
		f_{2+\mathbb{M}}(x) = f_{2}(x)
		$$
		where $\mathbb{M}$ is any positive integer and $f_{2+\mathbb{M}}(x)$ represents the output of the deeper neural network.
	\end{prop}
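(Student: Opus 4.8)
The plan is to recognize this proposition as the ReLU specialization of Theorem \ref{th2.1} applied with $\mathbb{K}=2$ and $\mathbb{N}=\mathbb{M}$, and to verify that for ReLU the approximation \eqref{est1} sharpens to an exact identity. First I would establish that $f_2(x)$ is bounded on $[0,1]$: since $g$ is continuous on the compact interval $[0,1]$ it is bounded, the training values $y_i=g(x_i)$ are finite, and $f_2$ is assembled from finitely many ReLU-activated, compactly supported basis functions $\phi_i$ of height one scaled by bounded coefficients; hence $B:=\sup_{x\in[0,1]}|f_2(x)|<\infty$.

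Next I would set up the added layers exactly as in the ReLU-like case of Theorem \ref{th2.1}. Following \eqref{p1}, at each appended layer I select a single active neuron with input $p_1=\epsilon f_2(x)+c$ and the recursion \eqref{pN}, fixing $\epsilon>0$ and then choosing $c$ large enough that $c>\epsilon B$. This guarantees $p_1(x)=\epsilon f_2(x)+c\geq c-\epsilon B>0$ for every $x\in[0,1]$, so the argument of the ReLU starts strictly positive.

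The key computation is to show the recursion collapses to the identity. For ReLU $a$, on the positive half-line $a$ is the identity, so $a'(c)=1$ and $a(c)=c$; substituting into \eqref{pN} gives $p_n=c+\bigl(a(p_{n-1})-c\bigr)$. Arguing by induction: if $p_{n-1}>0$ then $a(p_{n-1})=p_{n-1}$, whence $p_n=p_{n-1}>0$. Therefore $p_\mathbb{M}=p_1>0$, and plugging into the output layer \eqref{fkn} yields $f_{2+\mathbb{M}}=\tfrac{1}{\epsilon}a(p_\mathbb{M})-\tfrac{c}{\epsilon}=\tfrac{1}{\epsilon}(p_1-c)=f_2$, an exact equality rather than the $O(\epsilon)$ bound of the smooth case.

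The only real obstacle is the uniform positivity requirement: the recursion acts as the identity only where the ReLU is unsaturated, so I must ensure $p_n(x)>0$ simultaneously for all $x\in[0,1]$ and all $1\leq n\leq\mathbb{M}$. This is precisely what the threshold $c>\epsilon B$ secures, and it is available exactly because $f_2$ is bounded; since the induction preserves positivity, a single choice of $c$ handles every layer at once, so $\mathbb{M}$ may be taken to be any positive integer.
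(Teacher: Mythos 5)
Your proposal is correct and takes essentially the same approach as the paper: Proposition \ref{deeper} is exactly the ReLU case of Theorem \ref{th2.1} applied with $\mathbb{K}=2$ and $\mathbb{N}=\mathbb{M}$, where $c$ is chosen large enough that $p_1=\epsilon f_2(x)+c>0$ for all $x$, so the recursion \eqref{pN} collapses to $p_n=p_{n-1}$ and the output \eqref{fkn} reproduces $f_2(x)$ exactly. Your added details (the explicit bound $B$, the threshold $c>\epsilon B$, and the induction preserving positivity across layers) simply make precise what the paper leaves implicit.
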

	
	But for one-dimensional problems, to cause neural networks to fail, the number of neurons needs to be even larger than the training data. Generally, this scenario is not encountered in practical use. In the next sections, we will explore how, even with a huge amount of high-dimensional input data and smaller-sized neural networks, failures can still occur. Proposition \ref{deeper} also works for higher dimensional binary classification and approximation problems.
	
	\subsection{Two-Dimensional Examples}
	In this section, we will build two dimensional ``basis functions" based on one dimensional ones. The input variable is $(x,y)\in \mathbb{R}^2$, the region is $[0,1]\times [0,1]$. We define the one dimensional basis function as:
	\begin{equation}\label{ghat}
		\phi(\xi) = \frac{1}{h}a(\xi+h) -\frac{2}{h}a(\xi)+\frac{1}{h}a(\xi-h),
	\end{equation}
	where $\xi \in [0,1], h>0$, $a(\xi)$ is the ReLU function. As in Section \ref{1de}, we have the training set data $\{(x_i,y_j),z_{i,j}\}_{i,j = 1}^N$,
	where $x_1=y_1=0, x_N=y_N=1,$ $x_i, y_j$ are uniformly distributed in $[0,1]$, $z_{i,j}$ can be labels or function values. Denote
	\begin{equation}\label{phi_ij}
		\phi_i(x)=\phi(x-x_i) \text{ and } \phi_j(y)=\phi(y-y_j).
	\end{equation}
	Then we can define the 2-D ``basis function" as:
	$$
	\Phi(x,y) = \sum_{ \{x_i\} }\phi_i(x)+\sum_{ \{y_j\}}\phi_j(y),
	$$
	where sets $\{x_i\}, \{y_j\}$ are chosen as needed, see Figure \ref{fig:2-d-1} for examples. Similar to Section \ref{1de}, we consider a fully connected neural network with two input neurons, $2+\mathbb{M}$ hidden layers with ReLU activation functions, and one output neuron with linear activation function. 
	We define the width of the first hidden layer as $H_1$, the width of the second hidden layer as $H_2$, and the width of the $(2+k)$th hidden layer as $H_{2+k}$, where $1 \leq k \leq \mathbb{M}$, $\mathbb{M}$ can be any positive integer. Additionally, we denote the output of the neural network as $f_{2+\mathbb{M}}(x,y)$.  The cost function options include MSE error or MAE error.
	
	\subsubsection{Two-Dimensional Binary Classification}
	In the training set $\{(x_i,y_j),z_{i,j}\}_{i,j = 1}^N$ for the binary classification problem, $z_{i,j}=-1$ if $x_i<0.5$, and $z_{i,j}=1$ if $x_i \geq 0.5$. The size of the training data is $N^2$. 
	
	\begin{prop}
		If $H_1\geq 4N+2$, $H_2\geq 2$,  and $H_{2+k}\geq 1$, $1 \leq k \leq \mathbb{M}$, there exist weights and biases such that the loss function is zero. Meanwhile, for any $\epsilon > 0$ small enough, if $\text{dis}((x,y),\{x_i,y_j\}_{i,j=1}^{N}) > \epsilon$, then $f_{2+\mathbb{M}}(x,y) = 0$.
	\end{prop}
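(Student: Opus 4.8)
The plan is to mirror the one-dimensional binary classification construction, but using the two-dimensional ``basis function'' machinery already set up in \eqref{ghat}--\eqref{phi_ij}, and then to append the remaining $\mathbb{M}$ layers by invoking Proposition \ref{deeper}. First I would build the first hidden layer with $4N+2$ neurons so as to produce the $2N$ triangular bumps $\phi_i(x)$, $i=1,\dots,N$, and $\phi_j(y)$, $j=1,\dots,N$. Choosing the half-width $h=\tfrac{1}{2(N-1)}$ makes the supports in each coordinate \emph{non-overlapping}, and the identity $x-x_{i+1}+h=x-x_i-h$ (together with its analogue in $y$) lets the three ReLU arguments per bump share endpoints, so that $2N+1$ neurons suffice per coordinate, hence $4N+2$ in total.

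Next, the second hidden layer will use exactly two neurons, the analogues of \eqref{1d1neuron}--\eqref{1d2neuron}, with inputs
\begin{align*}
 I_1(x,y) &= \sum_{i:\,x_i\geq 0.5}\phi_i(x)+\sum_{j=1}^{N}\phi_j(y)-b_1,\\
 I_2(x,y) &= \sum_{i:\,x_i<0.5}\phi_i(x)+\sum_{j=1}^{N}\phi_j(y)-b_2,
\end{align*}
and the linear output neuron will read
$$
 f_2(x,y)=\frac{a(I_1(x,y))}{2-b_1}-\frac{a(I_2(x,y))}{2-b_2}.
$$
The crucial point, and the place where the argument genuinely differs from the one-dimensional case, is the choice of thresholds $b_1,b_2\in[1,2)$. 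Because the two-dimensional bump attains the value $2$ only at a grid point $(x_i,y_j)$, where both a row bump and a column bump equal $1$, but only the value $1$ along an entire grid \emph{line}, a threshold strictly between $1$ and $2$ is exactly what isolates the nodes from the grid lines; in the one-dimensional proposition the analogous threshold lived in $[0,1)$. I would then check case by case that $f_2(x_i,y_j)=z_{i,j}$ at every training point, so both the MSE and MAE losses vanish, and that $f_2\equiv 0$ both off the grid lines and on the grid lines away from the nodes, using that the $x$-supports for $x_i\geq0.5$ and for $x_i<0.5$ are disjoint, so the two ReLU branches never interact.

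For the localization statement I would observe that the active set of $I_1$ near a node is $\{(x,y):\phi_i(x)+\phi_j(y)>b_1\}$, which shrinks to the single point $(x_i,y_j)$ as $b_1\uparrow 2$; hence, given any small $\epsilon>0$, choosing $b_1,b_2$ close enough to $2$ forces $f_{2}(x,y)=0$ whenever $\mathrm{dis}((x,y),\{x_i,y_j\}_{i,j=1}^N)>\epsilon$. Finally, since $f_2$ is bounded (it takes values in $[-1,1]$), Proposition \ref{deeper} applies verbatim: appending $\mathbb{M}$ ReLU layers of width $\geq 1$ via \eqref{p1}--\eqref{fkn} with large $c$ reproduces the output exactly, giving $f_{2+\mathbb{M}}(x,y)=f_2(x,y)$ and preserving both the zero training loss and the degeneracy.

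I expect the only real obstacle to be the bookkeeping in the threshold argument, namely confirming that the separation between the value $1$ on grid lines and the value $2$ at nodes is clean, and that the two branches remain disjointly supported for $b$ near $2$, rather than any conceptual difficulty; the neuron count and the deep-layer extension follow directly from the earlier constructions.
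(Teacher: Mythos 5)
Your construction is exactly the paper's: the same $4N+2$ shared-input first layer with $h=\tfrac{1}{2(N-1)}$, the same two neurons computing $I_1,I_2$ with thresholds $b_1,b_2\in[1,2)$, the same output $a(I_1)/(2-b_1)-a(I_2)/(2-b_2)$, and the same appeal to Proposition \ref{deeper} to append the $\mathbb{M}$ extra layers. The only difference is that you spell out the zero-loss and support-shrinkage verifications that the paper leaves to the figures and to the following proposition, so the proposal is correct and essentially identical in approach.
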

	\begin{proof}
		Let $x_{i+1}-x_{i} = y_{j+1}-y_{j} = 2h>0$. Firstly, we construct the network with two hidden layers. For the first hidden layer, we need $2(2N+1)$ neurons. Each neuron has a distinct input as follows: $x-x_i$, $x-x_i-h$, or $x-x_i+h$; and $y-y_j$, $y-y_j-h$, or $y-y_j+h$, where $i,j=1,2,\cdots,N$. Then, we can use the output of the first hidden layer to construct the ``basis function" $\Phi(x,y)$ with a selected set $\{x_i,y_j\}$.
		
		For the second hidden layer, we need $2$ neurons, the input of the first neuron is:
		\begin{equation}\label{2d1neuron}
			I_1(x,y) := 
			\sum_{x_i\geq 0.5} \phi_i(x) 
			+
			\sum_{j=1}^{N}  \phi_j(y) 
			-b_1
		\end{equation}
		where $b_1$ is the bias. $I_1(x)$ is the linear combination of first hidden layer's output minus $b_1$. Similarly, we have the input of the second neuron:
		\begin{equation}\label{2d2neuron}
			I_2(x,y) := 
			\sum_{x_i< 0.5} \phi_i(x) 
			+
			\sum_{j=1}^{N}  \phi_j(y) 
			-b_2
		\end{equation}  
		Let $b_1,b_2 \in [1,2)$, then the final output for this network is:
		$$
		f_2(x,y) := \frac{a(I_1(x,y))}{2-b_1}+(-1)\frac{a(I_2(x,y))}{2-b_2}.
		$$ 
		If $b_1, b_2$ go to $2$, $f_2(x,y)$ develops ``spikes", see (a), (b) in Figure \ref{fig:2-d-2} for examples.
		
		Follow Proposition \ref{deeper}, we then construct the network with $2+\mathbb{M}$ hidden layers, such that 
		$f_{2+\mathbb{M}}(x,y) = f_2(x,y)$. 
	\end{proof}
	\begin{prop}
		Let $b=b_1=b_2\in [1,2),$ then the measure of compact support for $f_{2+\mathbb{M}}(x,y)$ on $[0,1]\times[0,1]$ is bounded by $N^2(2-b)^2/(N-1)^2$, which is decreasing to 0 as $b\rightarrow 2$.
	\end{prop}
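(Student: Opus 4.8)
The plan is to reduce the whole statement to a count of disjoint ``diamond'' regions followed by one elementary area estimate. First I would invoke Proposition \ref{deeper}, which gives $f_{2+\mathbb{M}}(x,y)=f_2(x,y)$, so the support of the deep network coincides with the support of the two-layer output $f_2(x,y)=a(I_1)/(2-b)-a(I_2)/(2-b)$. Since the ReLU $a(\cdot)$ vanishes on negatives, the support of $f_2$ is contained in $\{I_1>0\}\cup\{I_2>0\}$, and I would bound the measure of each piece separately, using $I_1,I_2$ from \eqref{2d1neuron} and \eqref{2d2neuron}.

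Next I would exploit the non-overlapping structure of the hat functions. Because consecutive centers are spaced $2h$ apart while each $\phi_i$ defined through \eqref{ghat}--\eqref{phi_ij} has support of width $2h$, at any $x$ at most one $\phi_i(x)$ is nonzero, and likewise for $y$. Hence on $\{I_1>0\}$ the $x$-sum collapses to a single term $\phi_i(x)$ with $x_i\geq 0.5$ and the $y$-sum to a single term $\phi_j(y)$, both lying in $[0,1]$. With $b\in[1,2)$, the condition $I_1>0$, i.e.\ $\phi_i(x)+\phi_j(y)>b$, forces \emph{both} contributions to be strictly positive, so $(x,y)$ must lie in the rectangle $[x_i-h,x_i+h]\times[y_j-h,y_j+h]$ centered at a grid point $(x_i,y_j)$. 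Writing $u=x-x_i$, $v=y-y_j$ and using $\phi_i(x)=1-|u|/h$, $\phi_j(y)=1-|v|/h$, the inequality becomes $|u|+|v|<h(2-b)$, an $L^1$-ball around the grid point that is contained in the axis-aligned square of side $2h(2-b)$, hence of area at most $4h^2(2-b)^2$.

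The same analysis applies to $\{I_2>0\}$ with the roles of $x_i\geq 0.5$ replaced by $x_i<0.5$. Collecting centers, $I_1$ produces diamonds at $(x_i,y_j)$ with $x_i\geq 0.5$ (all $j$) and $I_2$ those with $x_i<0.5$ (all $j$); together each grid point $(x_i,y_j)$ occurs exactly once, yielding $N^2$ diamonds, and they are pairwise disjoint because distinct grid points sit in disjoint rectangles and the $I_1$- and $I_2$-diamonds occupy disjoint $x$-strips. Summing the square bounds gives a support measure at most $N^2\cdot 4h^2(2-b)^2$, and substituting $h=1/(2(N-1))$ produces exactly $N^2(2-b)^2/(N-1)^2$, which decreases monotonically to $0$ as $b\to 2$.

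The main obstacle, and really the only delicate point, is the collapse in the second paragraph: one must verify that $b\geq 1$ genuinely forces both hat contributions to be positive, so that the active set is a union of diamonds centered at grid points rather than thin slabs, and that the $I_1$- and $I_2$-diamonds cannot overlap. Everything afterward is bookkeeping of the center count and the diamond-in-square containment; clipping of boundary diamonds by $[0,1]^2$ only lowers the measure and so is harmless for an upper bound.
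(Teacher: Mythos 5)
Your proof is correct and takes essentially the same route as the paper's: localize the support of $a(I_1)$ and $a(I_2)$ to neighborhoods of the grid points $(x_i,y_j)$, bound each local piece by the axis-aligned square of side $2(2-b)h$ (area $4h^2(2-b)^2$), sum over the $N^2$ grid points, and substitute $h=1/(2(N-1))$. The only difference is one of rigor: you explicitly justify the localization (non-overlapping hat supports plus $b\geq 1$ forcing both hat contributions to be positive) and identify the exact $L^1$-ball shape before passing to the square, steps the paper asserts informally via its ``remove the region where $\phi+1-b\leq 0$'' argument.
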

	\begin{proof}
		For a certain point $(x_i,y_j), x_i\geq 0.5$, let's see how the compact support around it will shrink. The region to be considered is constrained to $[x_i-h,x_i+h]\times [y_j-h,y_j+h]$. On this region, it's easy to see, we have
		$$
		f_{2+\mathbb{M}}(x,y) = f_2(x,y) = \frac{a(\phi_i(x)+\phi_j(y) -b)}{2-b}. 
		$$
		Then the question becomes what's the area of compact support for ${a(\phi_i(x)+\phi_j(y) -b)}$, on which we have $\phi(x-x_i)+\phi(y-y_j) -b \geq 0$. Since $\phi(x-x_i), \phi(y-y_j)\leq 1$, we can remove the region on which $\phi(x-x_i)+1-b\leq 0$ or $\phi(y-y_j)+1-b\leq 0$. Then the compact support of $f_2(x,y)$ around $(x_i,y_j)$ is contained in a region $$[x_i-(2-b)h,x_i+(2-b)h]\times[y_j-(2-b)h,y_j+(2-b)h]$$ with area less than $(2(2-b)h)^2$. The analysis here can be applied to calculate the upper bound of high dimensional compact support's volume around a certain point. The total area on which the model $f_{2+\mathbb{M}}(x,y)$ can give us some positive feedback is less than 
		$4N^2(2-b)^2h^2$ where $h=1/(2N-2).$ The upper bound of the model's accuracy can be shown in Figures \ref{fig:2d3a} and \ref{fig:2d3b}.
	\end{proof}
	As the dimension increases, with a fixed number of sub-intervals in each dimension, we have more training data. Let $\mathbb{M}=1$, then the network has three hidden layers. We can compare the number of neurons and the size of training set in the following table:
	\begin{center}
		\begin{tabular}{l*{3}{c}}
			$N$                                      & 50    & 100    & 1000   \\
			\hline
			Data Set size $=N^2$                    & 2500 & 10000 & 1000000   \\
			First hidden layer                 & 202   & 402    & 4002    \\
			Second hidden layer            & 2       & 2       & 2 \\
			Third hidden layer            & 1       & 1       & 1 
		\end{tabular}
	\end{center}
	So even the size of training set is larger than the number of neurons, the model can still fail with the global optimal solution we constructed. Also more layers do not help.
	
	If we increase the number of neurons in the second hidden layer, then using the basic shapes shown in Figure \ref{fig:2-d-1}, we can build a variety of examples not only for binary classification but also for multi-class classification.

	\subsubsection{Two-Dimensional Function approximation}
	To approximate a continuous function $g(x,y)$, $(x,y)\in [0,1]^2$ by the ReLU neural network with $2+\mathbb{M}$ hidden layers, we use the training set $\{(x_i,y_j),z_{i,j}\}_{i,j=1}^N,$ where $z_{i,j} = g(x_i,y_j)$. $x_i,y_j \in [0,1]$ are uniformly distributed , $x_1=y_1 = 0, x_N=y_N=1$. 
	
	\begin{prop}
		If $H_1\geq 2N+4$, $H_2\geq N^2$  and $H_{2+k}\geq 1$, $1 \leq k \leq \mathbb{M}$,  there exist weights and biases such that the loss function is zero. Meanwhile, for any $\epsilon > 0$ small enough, if $\text{dis}((x,y),\{x_i,y_j\}_{i,j=1}^{N}) > \epsilon$, then $f_{2+\mathbb{M}}(x,y) = 0$, indicating if $g(x,y)\not= 0$, the approximation is poor.
	\end{prop}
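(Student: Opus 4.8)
The plan is to mirror the one-dimensional function approximation construction, but to replace each scalar ``hat'' by a localized two-dimensional bump produced by a single thresholded ReLU neuron, and then to invoke Proposition \ref{deeper} to pass from depth two to depth $2+\mathbb{M}$. First I would build the first hidden layer with overlapping supports: set $h = 1/(N-1)$ so that $x_{i+1}-x_i = y_{j+1}-y_j = h$. Writing each $\phi_i(x)$ and $\phi_j(y)$ via \eqref{1dhat}, the affine inputs $x-x_i\pm h$ coincide with $x-x_{i\mp 1}$, so the distinct inputs needed are $x-x_i$ and $y-y_j$ for $i,j = 0,1,\dots,N+1$ (the indices $0$ and $N+1$ accounting for the outer knots $x_0=-h$, $x_{N+1}=1+h$), giving $N+2$ neurons in $x$ and $N+2$ in $y$, i.e. $H_1 = 2N+4$. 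The elementary fact I would record here is the nodal property $\phi_i(x_k)=\delta_{ik}$ and $\phi_j(y_l)=\delta_{jl}$, which holds because each hat peaks at value $1$ at its own knot and vanishes at every other knot.

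Next I would devote the second hidden layer to one neuron per training point, which needs exactly $H_2=N^2$ neurons. For the pair $(i,j)$, assign the input $\phi_i(x)+\phi_j(y)-b$ with a common bias $b\in[1,2)$, and define the output
$$
f_2(x,y) := \sum_{i,j=1}^{N} z_{i,j}\,\frac{a\big(\phi_i(x)+\phi_j(y)-b\big)}{2-b}.
$$
The heart of the argument is that the threshold $b\ge 1$ turns the additive combination $\phi_i+\phi_j$ into a genuinely localized bump: since $\phi_i,\phi_j\le 1$, the inequality $\phi_i(x)+\phi_j(y)>b$ forces both $\phi_i(x)>b-1$ and $\phi_j(y)>b-1$, so the support of the $(i,j)$ neuron is contained in the square $[x_i-(2-b)h,\,x_i+(2-b)h]\times[y_j-(2-b)h,\,y_j+(2-b)h]$, the shrinking-square estimate already used in the classification proposition.

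To verify zero loss I would evaluate at a training point $(x_k,y_l)$: by the nodal property the $(i,j)$ input becomes $\delta_{ik}+\delta_{jl}-b$, which equals $2-b>0$ only when $(i,j)=(k,l)$ and is $\le 1-b\le 0$ otherwise, so every off-diagonal neuron is killed by the ReLU and $f_2(x_k,y_l)=z_{k,l}\,(2-b)/(2-b)=z_{k,l}$. Hence both the MSE and MAE costs vanish and $(W,B)$ is a global minimizer. For the failure statement, given $\epsilon>0$ I would choose $b$ close enough to $2$ that $(2-b)h\sqrt{2}<\epsilon$; then every point of every neuron's support lies within Euclidean distance $\epsilon$ of its center $(x_i,y_j)$, so $\text{dis}((x,y),\{x_i,y_j\})>\epsilon$ forces all neurons inactive and $f_2(x,y)=0$. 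Finally, since $g$ is continuous on the compact $[0,1]^2$ it is bounded, hence so is $f_2$, and Proposition \ref{deeper} (the ReLU instance of Theorem \ref{th2.1}) lets me append $\mathbb{M}$ width-one hidden layers with $f_{2+\mathbb{M}}(x,y)=f_2(x,y)$, completing the claim.

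I expect the main obstacle to be the localization step rather than the bookkeeping: one must ensure that the single ReLU applied to the \emph{sum} $\phi_i+\phi_j$ (as opposed to a product) simultaneously interpolates exactly at the nodes and has support collapsing to a point. The nodal property $\phi_i(x_k)=\delta_{ik}$ is what rescues the interpolation despite the overlapping supports of the underlying hats, while the bounds $\phi_i,\phi_j\le 1$ together with $b\ge 1$ are what guarantee that ``one coordinate large'' cannot activate a neuron, which is precisely what drives the support to zero as $b\to 2$.
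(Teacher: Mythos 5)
Your proposal is correct and follows essentially the same route as the paper: the same first layer of $2N+4$ neurons feeding the hats $\phi_i(x),\phi_j(y)$, the same second layer of $N^2$ neurons computing $a(\phi_i(x)+\phi_j(y)-b)/(2-b)$ with $b\in[1,2)$, and the same appeal to Proposition \ref{deeper} for the extra layers. In fact you spell out details the paper leaves implicit (the nodal property $\phi_i(x_k)=\delta_{ik}$, the exact-interpolation check, and the $(2-b)h$ shrinking-square bound borrowed from the classification proposition), so nothing is missing.
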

	\begin{proof}
		Let $h = x_{i+1}-x_{i} = y_{j+1}-y_{j}$, we define the ``basis function" as: 
		\begin{equation}\label{2dhat2}
			\Phi_{i,j}(x,y) = \phi_i(x)+\phi_j(y),
		\end{equation}
		where $\phi_i, \phi_j$ are defined in \eqref{phi_ij}, $i,j=1,2,\cdots,N$, and $\Phi_{i,j}(x,y)$ has the height $2$, see Figure \ref{fig:2d1a} for an example. The inputs for first hidden layer are similar as classification problem, however, we only need $2(N+2)$ neurons, which is less. Using the outputs of the first hidden layer, we can build ``basis functions": $\Phi_{i,j}(x,y)$. 
		Then we need $N^2$ neurons in second hidden layer, which is much more, and the final output is:
		$$
		f_2(x,y) := \sum_{i,j=1}^{N}z_{i,j}\frac{a(\Phi_{i,j}(x,y)-b)}{2-b},
		$$ 
		where $b\in [1,2)$. So as $b\rightarrow 2$, the approximation will fail though it is the global optimal solution. Deeper networks will fail if constructed in the same way as described in Proposition \ref{deeper}.
	\end{proof}
	\begin{figure}%
		\centering
		\subfigure[][]{%
			\label{fig:2d1a}%
			\includegraphics[width=0.4\linewidth]{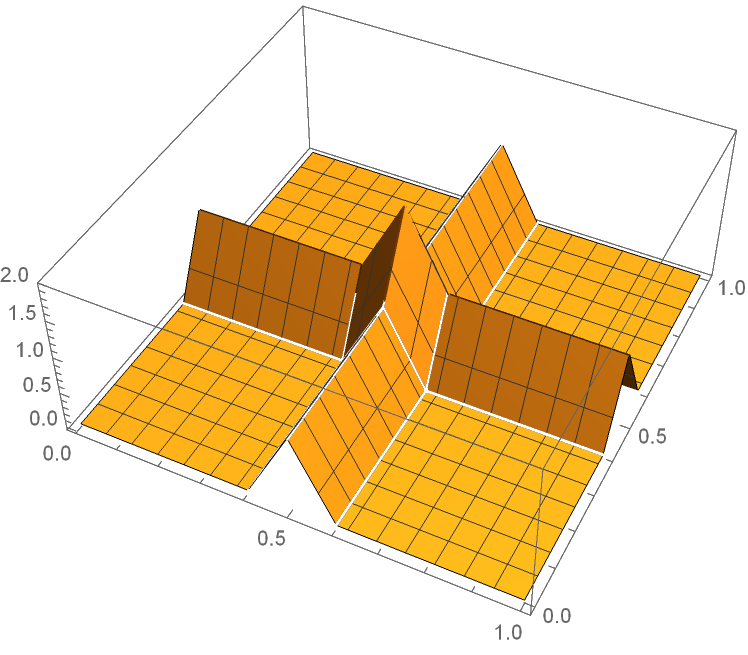}}%
		\hspace{8pt}%
		\subfigure[][]{%
			\label{fig:2d1b}%
			\includegraphics[width=0.4\linewidth]{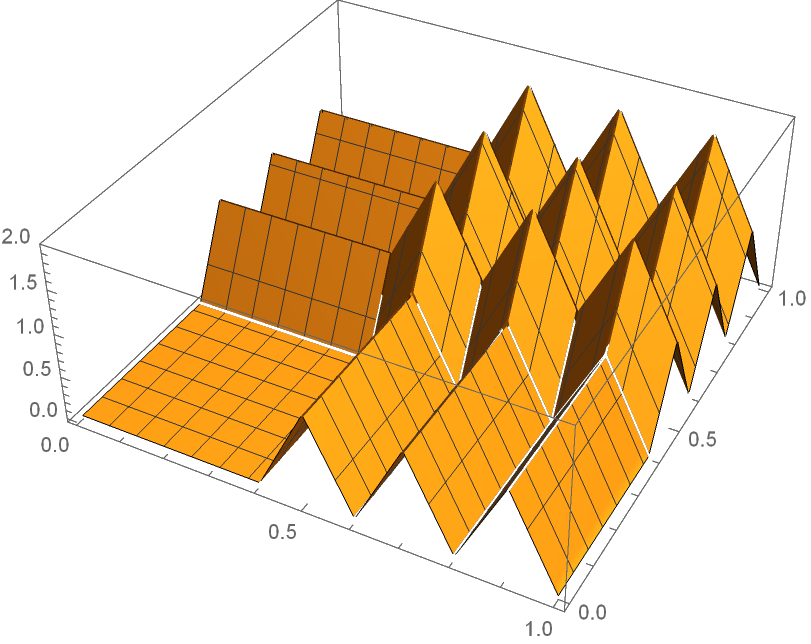}} \\
		\subfigure[][]{%
			\label{fig:2d1c}%
			\includegraphics[width=0.4\linewidth]{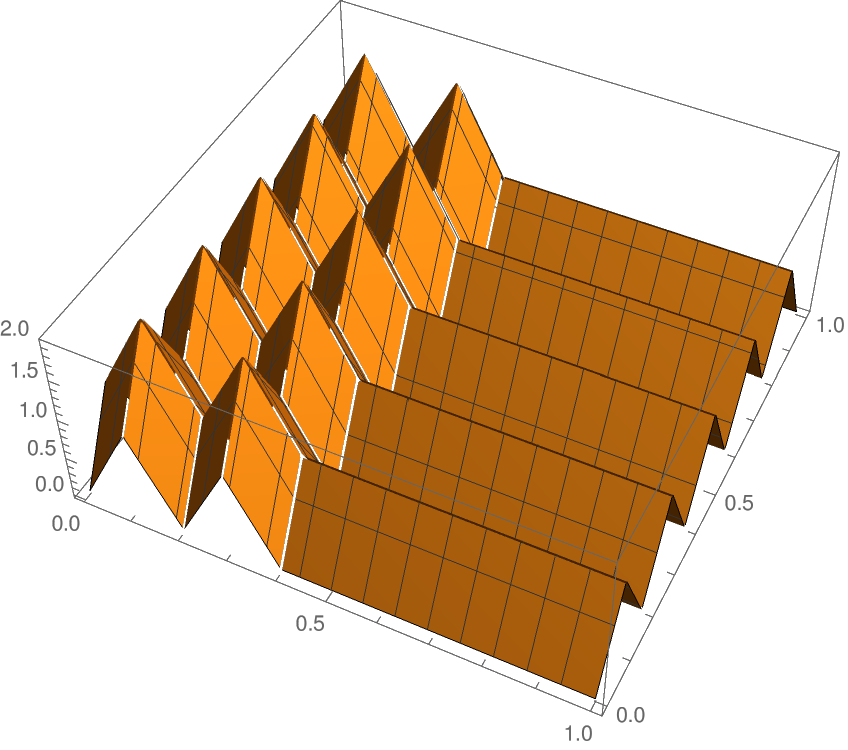}}%
		\hspace{8pt}%
		\subfigure[][]{%
			\label{fig:2d1d}%
			\includegraphics[width=0.4\linewidth]{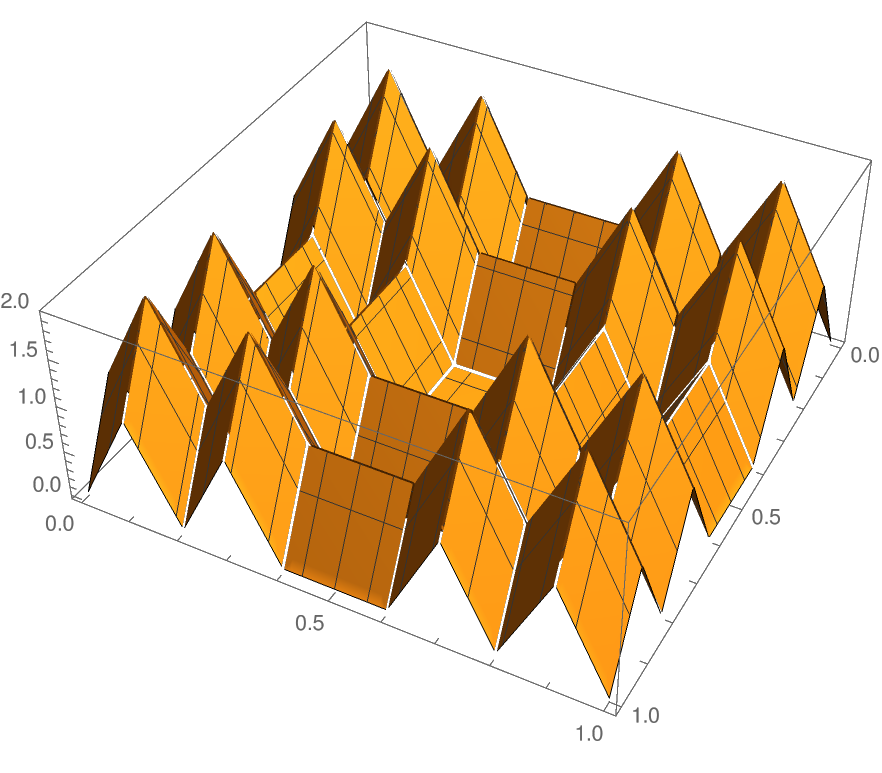}}%
		\caption[2d ``basis"]{
			\subref{fig:2d1a} 2-D ``basis function" with $x_i=y_j=0.5$ and $h=0.1$;
			\subref{fig:2d1b} 2-D ``basis function" with $x_i=y_j=0.5,0.7,0.9$ and $h=0.1$;
			\subref{fig:2d1c} 2-D ``basis function" with $x_i=0.1, 0.3$, $y_j=0.1, 0.3, 0.5, 0.7, 0.9$ and $h=0.1$; 
			and,
			\subref{fig:2d1d} 2-D ``basis function" with $x_i=0.1, 0.3, 0.7,0.9$, $y_j=0.1, 0.3, 0.7, 0.9$ and $h=0.1$.}%
		\label{fig:2-d-1}%
	\end{figure}
	\begin{figure}%
		\centering
		\subfigure[][]{%
			\label{fig:2d2a}%
			\includegraphics[width=0.4\linewidth]{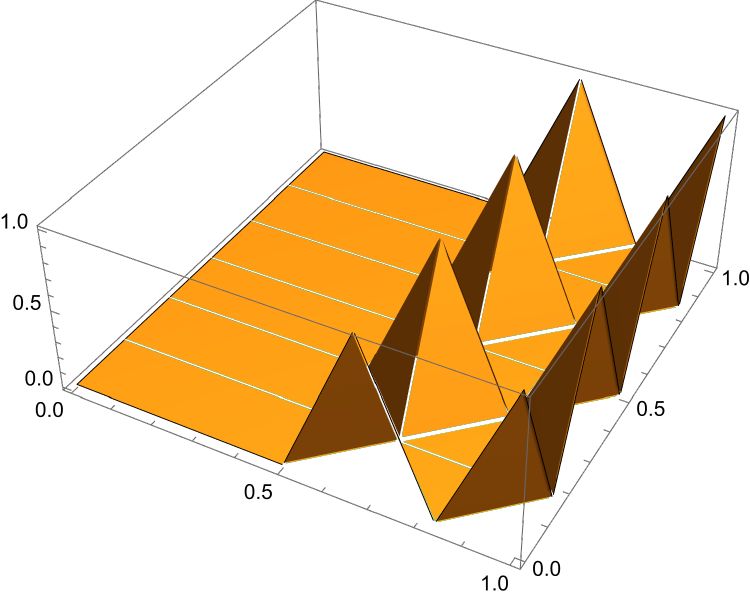}}%
		\hspace{8pt}%
		\subfigure[][]{%
			\label{fig:2d2b}%
			\includegraphics[width=0.4\linewidth]{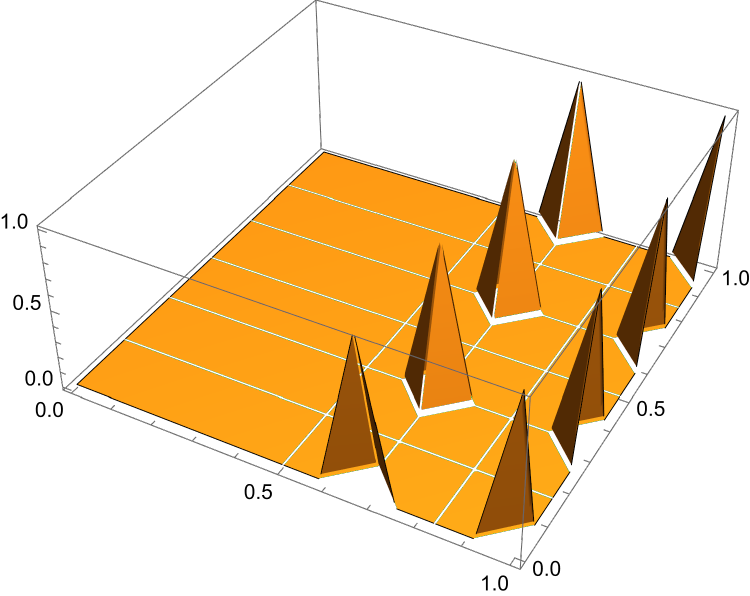}} 
		\caption[2d ``basis"]{ Here $N=4$, $x_1=y_1=0$, $x_N=y_N=1$, $h=1/6=(x_{i+1}-x_{i})/2$. 
			\subref{fig:2d2a} 
			Graph of $a(I_1(x,y))/(2-b_1)$ with  $b_1=1$;
			\subref{fig:2d2b} 
			Graph of  $a(I_1(x,y))/(2-b_1)$ with  $b_1=1.5$.
		}%
		\label{fig:2-d-2}%
	\end{figure}
	\begin{figure}%
		\centering
		\subfigure[][]{%
			\label{fig:2d3a}%
			\includegraphics[width=0.4\linewidth]{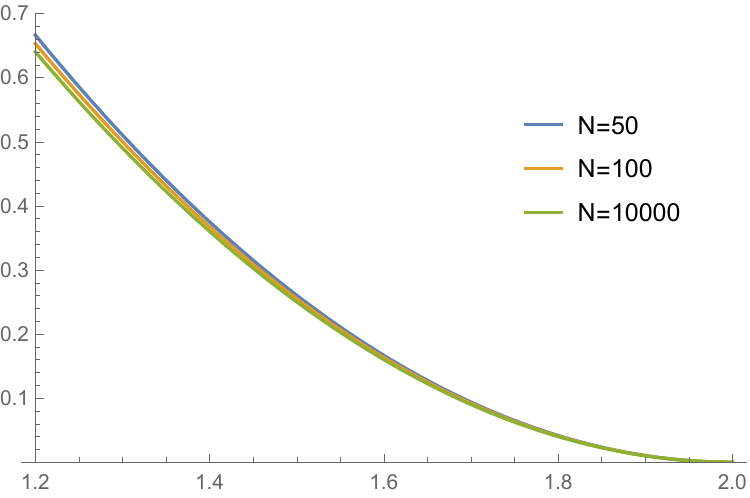}}%
		\hspace{8pt}%
		\subfigure[][]{%
			\label{fig:2d3b}%
			\includegraphics[width=0.4\linewidth]{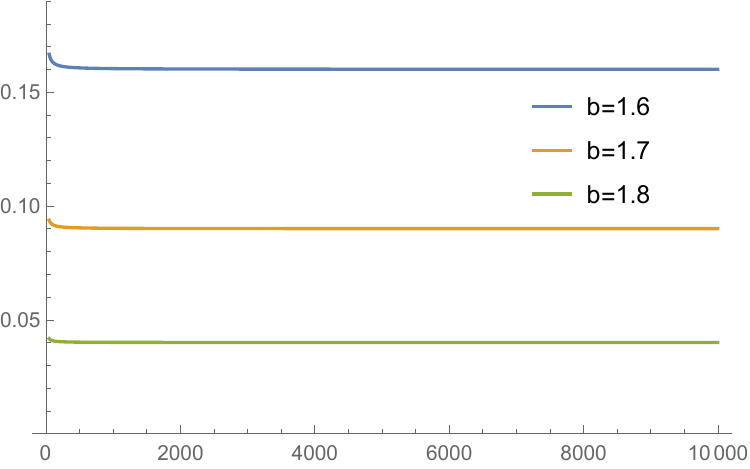}} 
		\caption[2d ``basis"]{ Graph for upper bound of model's accuracy $N^2(2-b)^2/(N-1)^2$.
			\subref{fig:2d3a} 
			For fixed $N$, let $b$ vary from $1.2$ to $2$;
			\subref{fig:2d3b} 
			for fixed $b$, let $N$ vary from $50$ to $10000$.
		}%
		\label{fig:2-d-3}%
	\end{figure}

	\subsection{High Dimensional Model Problems}\label{hd-relu}
	In this section, the examples will be generalized to high dimension.   
	Let ${\bf x } \in \mathbb{R}^d, d\geq 3,$
	$$
	{\bf x} = (x_1,x_2,\cdots,x_d),
	$$ 
	the region is a $d$ dimensional hyper-cube $[0,1]^d$. And $\{x_{1,i_1}\}_{i_1=1}^N$ is denoted as the set of scalar values, which are uniformly distributed in the first dimension within the interval $[0,1]$. Similarly, we have the set in the $j$th dimension within the interval $[0,1]$ as $\{x_{j,i_j}\}_{i_j=1}^N$, where $j=2,\cdots,d$. 
	The training set is defined as
	\begin{equation}\label{Txy}
		\Xi :=\bigg\{\big(x_{1,i_1},x_{2,i_2},\cdots,x_{d,i_d}\big),\ y_{i_1,i_2,\cdots,i_d}\bigg\}_{i_1,i_2,\cdots,i_d=1}^N
	\end{equation}
	where $x_{1,1}=x_{2,1}= \cdots =x_{d,1} =0$,
	$x_{1,N}=x_{2,N}= \cdots =x_{d,N} =1$, $y_{i_1,i_2,\cdots,i_d}$ can be labels or function values. 
	We denote the basis function in $j$th dimension, centered at $x_{j,i_j}$, as
	\begin{equation}\label{phi_j_ij}
		\phi_{j,i_j}(x_j)=\phi(x_j-x_{j,i_j}),
	\end{equation}
	where $x_j$ is the $j$th variable of ${\bf x}$.
	Then we can define the $d$-dimensional ``basis function" as:
	$$
	\Phi({\bf x}) = \sum_{j=1}^{d} 
	\left(
	\sum_{ \{x_{j,i_j}\}}\phi_{j,i_j}(x_j)
	\right),
	$$
	where sets $\{x_{j,i_j}\}$ are chosen as needed.
	We consider a fully connected neural network comprising $d$ input neurons, $2+\mathbb{M}$ hidden layers employing ReLU activation functions, and one output neuron with a linear activation function. The width of the first hidden layer is denoted as $H_1$, the width of the second hidden layer as $H_2$, and the width of the $(2+k)$th hidden layer as $H_{2+k}$, where $1 \leq k \leq \mathbb{M}$, and $\mathbb{M}$ represents any positive integer. Furthermore, we let the output of the neural network be $f_{2+\mathbb{M}}(x,y)$.  The cost function can be MSE or MAE errors.
	\subsubsection{High Dimensional Binary Classification}\label{hdbc}
	Suppose in the training set $\Xi$, see \eqref{Txy}, $y_{i_1,i_2,\cdots,i_d} = -1$ if $x_{1,i_1} < 0.5$, and $y_{i_1,i_2,\cdots,i_d} = 1$ if $x_{1,i_1} \geq 0.5$. The size of the training set is $N^d$. We define $\text{dis}({\bf x},\Xi_{\bf x})$ as the distance between ${\bf x} \in \mathbb{R}^d$ and the set $\Xi_{\bf x}$ from $\Xi$, which is 
	\begin{equation}\label{Tx}
		\Xi_{\bf x} :=\bigg\{\big(x_{1,i_1},x_{2,i_2},\cdots,x_{d,i_d}\big)\bigg\}_{i_1,i_2,\cdots,i_d=1}^N.
	\end{equation}
	\begin{prop}
		If $H_1\geq 2dN+d$, $H_2\geq 2$,  and $H_{2+k}\geq 1$, $1 \leq k \leq \mathbb{M}$, there exist weights and biases such that the loss function is zero. Meanwhile, for any $\epsilon > 0$ small enough, if $\text{dis}({\bf x},\Xi_{\bf x}) > \epsilon$, then $f_{2+\mathbb{M}}({\bf x}) = 0$.
	\end{prop}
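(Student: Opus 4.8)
The plan is to generalize the two-dimensional construction essentially verbatim: first exhibit an explicit two-hidden-layer network $f_2({\bf x})$ that interpolates all $N^d$ labels with zero loss, then invoke Proposition \ref{deeper} (through \eqref{p1}--\eqref{fkn}) to pad the network with $\mathbb{M}$ extra width-one ReLU layers while preserving $f_{2+\mathbb{M}} = f_2$. Throughout I would fix the spacing $x_{j,i_j+1} - x_{j,i_j} = 2h$ in every dimension, so that each one-dimensional hat $\phi_{j,i_j}$ from \eqref{phi_j_ij} has height one and support $[x_{j,i_j}-h,\,x_{j,i_j}+h]$, and distinct centers within a fixed dimension have disjoint supports.

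For the first hidden layer I would reuse the shoulder-sharing trick: in each dimension $j$, the three shifts $x_j-x_{j,i_j}+h$, $x_j-x_{j,i_j}$, $x_j-x_{j,i_j}-h$ needed to assemble all $N$ hats collapse to only $2N+1$ distinct pre-activations, since $x_j-x_{j,i_j+1}+h = x_j-x_{j,i_j}-h$. Summing over the $d$ dimensions gives exactly $d(2N+1)=2dN+d$ neurons, matching $H_1$, and linear combinations of their outputs reproduce every $\phi_{j,i_j}$. In the second hidden layer I use the two neurons
\begin{align*}
	I_1({\bf x}) &:= \sum_{x_{1,i_1}\geq 0.5}\phi_{1,i_1}(x_1) + \sum_{j=2}^{d}\sum_{i_j=1}^{N}\phi_{j,i_j}(x_j) - b_1, \\
	I_2({\bf x}) &:= \sum_{x_{1,i_1}< 0.5}\phi_{1,i_1}(x_1) + \sum_{j=2}^{d}\sum_{i_j=1}^{N}\phi_{j,i_j}(x_j) - b_2,
\end{align*}
and set $f_2({\bf x}) = a(I_1)/(d-b_1) - a(I_2)/(d-b_2)$ with $b_1,b_2 \in [d-1,d)$. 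A direct evaluation at a grid point, where exactly one hat of value $1$ is active per dimension, gives $I_1 = d-b_1>0$ and $I_2 = (d-1)-b_2\leq 0$ when $x_{1,i_1}\geq 0.5$, hence $f_2=1$; the roles of $I_1,I_2$ swap when $x_{1,i_1}<0.5$, giving $f_2=-1$. Thus the loss vanishes and $(W,B)$ is a global minimizer.

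For the failure part I would localize. On the box $\prod_{j=1}^{d}[x_{j,i_j}-h,\,x_{j,i_j}+h]$ around a grid point only one hat per dimension is active, so $I_1({\bf x}) = \sum_{j=1}^{d}\phi(x_j-x_{j,i_j}) - b_1$. Because each summand is at most $1$ and $b_1\geq d-1$, the requirement $a(I_1)>0$ forces $\phi(x_j-x_{j,i_j}) > b_1-(d-1)$ in \emph{every} dimension simultaneously, i.e. $|x_j-x_{j,i_j}|<(d-b_1)h$ for all $j$. Consequently the support of $a(I_1)$, and by the same argument that of $a(I_2)$, lies in the union over grid points of boxes of half-width $(d-b)h$. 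Given $\epsilon>0$, choosing $b$ close enough to $d$ that $\sqrt{d}\,(d-b)h<\epsilon$ confines each box to the $\epsilon$-neighborhood of its center, so $\text{dis}({\bf x},\Xi_{\bf x})>\epsilon$ forces $f_2({\bf x})=0$, and Proposition \ref{deeper} carries this over to $f_{2+\mathbb{M}}$.

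The main obstacle I anticipate is the high-dimensional coupling in the support estimate: unlike the scalar case, proximity to a grid point in a single coordinate is not enough, because each hat contributes at most $1$ while the bias is pushed up toward $d$. The argument must establish that the $d$ per-dimension proximity conditions are forced \emph{conjunctively}, which is exactly what shrinks the active set to a box of volume $(2(d-b)h)^d$ and drives it below any prescribed $\epsilon$. Pinning down the bias window $[d-1,d)$ so that negative grid points land precisely on $I_1\leq 0$ while positive ones give $I_1=d-b_1$ is a secondary point needing care, but it reduces to a routine evaluation once the shoulder-sharing count and the localization are in hand.
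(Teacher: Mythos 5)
Your proposal is correct and takes essentially the same route as the paper's proof: the same shoulder-sharing first layer with $d(2N+1)=2dN+d$ neurons, the same two second-layer neurons $I_1,I_2$ with biases in $[d-1,d)$ and output $a(I_1)/(d-b)-a(I_2)/(d-b)$, and the same appeal to Proposition \ref{deeper} to append the $\mathbb{M}$ width-one layers. Your explicit grid-point evaluation and the conjunctive per-dimension support estimate (forcing $|x_j-x_{j,i_j}|<(d-b)h$ in every coordinate, hence boxes of volume $(2(d-b)h)^d$) simply spell out details the paper leaves implicit by referring back to the 2-D analysis.
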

	\begin{proof} Let $x_{j,i_j+1}-x_{j,i_j} = 2h>0, j=1,2,\cdots, d$. For the first hidden layer, we require $d(2N+1)$ neurons. Each neuron has distinct inputs: $x_j-x_{j,i_j}$, $x_j-x_{j,i_j}-h$, or $x_j-x_{j,i_j}+h$. Then, we can use the output of first hidden layer, to build ``basis function" $\Phi({\bf x})$ with a selected sets $\{x_{j,i_j}\}, j=1,2,\cdots, d$. 
		
		For the second hidden layer, we need $2$ neurons, the input of first neuron is:
		\begin{equation}\label{dd1neuron}
			I_1({\bf x}) := 
			\sum_{x_{1,i_1}\geq 0.5} \phi_{1,i_1}
			+
			\sum_{j=2}^{d} 
			\sum_{ i_j=1}^{N}\phi_{j,i_j}
			-b_1
		\end{equation}
		where $b_1$ is the bias. $I_1(x)$ is the linear combination of first hidden layer's output. Similarly, we have the input of second neuron:
		\begin{equation}\label{dd2neuron}
			I_2({\bf x}) := 
			\sum_{x_{1,i_1}< 0.5} \phi_{1,i_1}
			+
			\sum_{j=2}^{d} 
			\sum_{ i_j=1}^{N}\phi_{j,i_j}
			-b_2
		\end{equation}  
		Let $b=b_1=b_2 \in [d-1,d)$, and the final output be:
		$$
		f_{2}({\bf x}) := 
		\frac{a(I_1({\bf x}))}{d-b}
		+(-1)\frac{a(I_2({\bf x}))}{d-b}.
		$$ 
		The measure of compact support for $f_2({\bf x})$ on $[0,1]^d$ is decreasing to 0 as $b\rightarrow d$. 
		If we add $\mathbb{M}$ extra hidden layers, as in Proposition \ref{deeper}, the function $f_{2+\mathbb{M}}({\bf x}) = f_2({\bf x})$ can be constructed.
	\end{proof}
	
	Similar to the 2-D case, we know the compact support of $f_{2+\mathbb{M}}({\bf x})$ around $(x_{1,i_1}$,$x_{2,i_2}$,$\cdots$, $x_{d,i_d})$ is contained in a region with $d$-volume less than $(2(d-b)h)^d$, where $b=b_1=b_2 \in [d-1,d)$.  
	\begin{remark}
		The total $d$-volume on which the model $f_{2+\mathbb{M}}({\bf x})$ can give us some positive feedback is less than 
		$$\frac{N^d}{(N-1)^d}(d-b)^d.$$ 
	\end{remark}
	An example for 3-D case can be seen in Figure \ref{fig:3-d-1}. As $b$ approaches $d$ from $d-1$, a higher-dimensional problem demonstrates analogous behaviors.
	\begin{figure}
		\centering
		\subfigure[][]{%
			\label{fig:3d3a}%
			\includegraphics[width=0.4\linewidth]{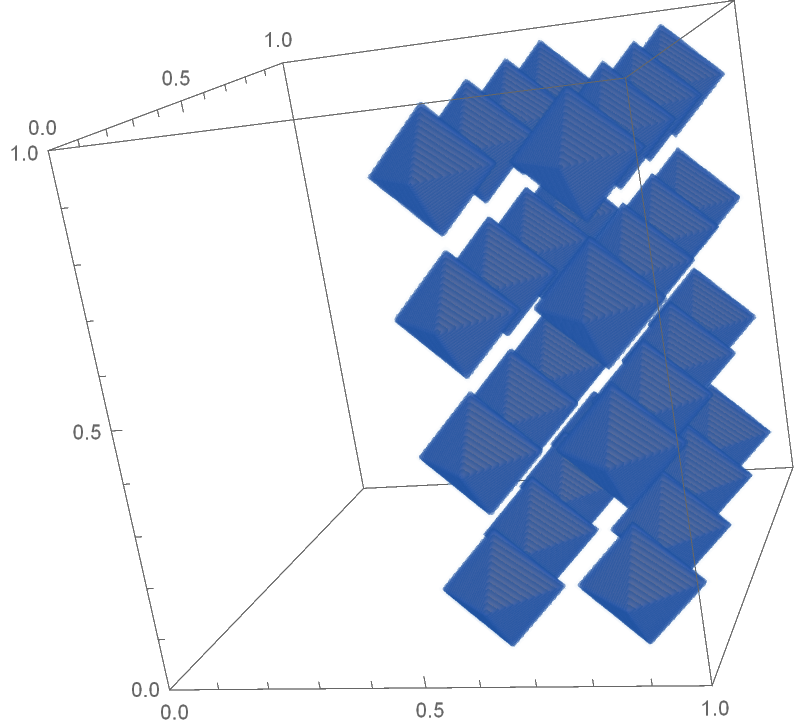}}%
		\hspace{8pt}%
		\subfigure[][]{%
			\label{fig:3d3b}%
			\includegraphics[width=0.4\linewidth]{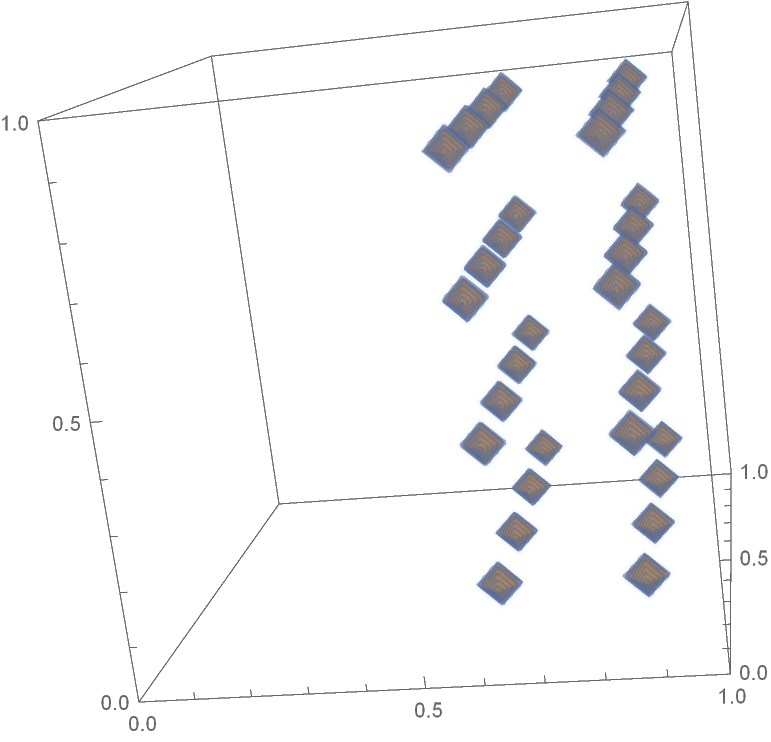}} 
		\caption[2d ``basis"]{Density color plots for a 3-D function: blank means 0, blue indicates values close to zero. The center of each diamond has a value of $1$, while the surface has a value of $0$. Here $N=4$, $x_{j,1} =1/8, j=1,2,3$, $h=1/8$,
			\subref{fig:3d3a} 
			Graph of $a(I_1({\bf x}))/(3-b)$ with  $b=2.1$;
			\subref{fig:3d3b} 
			Graph of $a(I_1({\bf x}))/(3-b)$ with $b=2.7$.
		}%
		\label{fig:3-d-1}%
	\end{figure}
	
	\subsubsection{Image classification}\label{sec_img}
	This is another example of a high-dimensional classification problem. Suppose we have $3\times 3$ pixel images, which can be viewed as $3\times 3$ matrices or $9$-dimensional vectors, with each pixel's grayscale ranging from $0$ to $255$. These images are divided into two classes. The `{\bf dark}' class comprises pixels with values less than $128$, while the `{\bf light}' class consists of pixels with values greater than or equal to $128$, as shown in Figure \ref{fig:3-d-2}. This serves as the ground truth.
	
	\begin{figure} 
		\centering
		\subfigure[][]{%
			\label{fig:3d2a}%
			\includegraphics[width=0.4\linewidth]{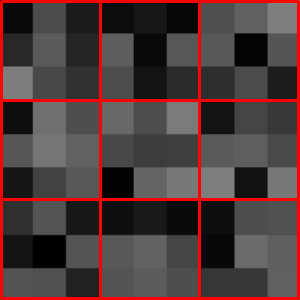}}%
		\hspace{8pt}%
		\subfigure[][]{%
			\label{fig:3d2b}%
			\includegraphics[width=0.4\linewidth]{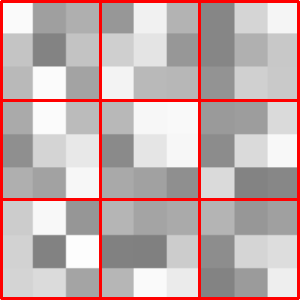}} 
		\caption[2d ``basis"]{
			\subref{fig:3d2a} 
			Nine `dark' pictures each has 3$\times$3 pixels;
			\subref{fig:3d2b} 
			Nine `light' ones each has 3$\times$3 pixels.
		}%
		\label{fig:3-d-2}%
	\end{figure}
	
	Usually, we build a model to predict classes with less training data compared with the ground truth. To choose the training data, we employ the color reduction as a transform of each pixel's gray scale:  
	\begin{equation}\label{grayscale}
		\left \lfloor{\frac{p}{m}}\right \rfloor 
		\times m, \quad m=1,2,\cdots, 255,
	\end{equation}
	where $0\leq p\leq 255$ represents the value of a given pixel, the $\left \lfloor{\ }\right \rfloor $ symbol denotes the floor function. The color and texture of images undergo unnoticeable changes if $m \leq 5$. Based on this observation, let $2\leq m \leq 5$. The training set is defined as follows: 
	$$\Xi := \bigg\{\big(x_{1,i_1},x_{2,i_2},\cdots,x_{9,i_9}\big), y_{i_1,i_2,\cdots,i_9}\bigg\}_{i_1,i_2,\cdots,i_9=1}^N$$
	where $N=\left \lfloor{255/m}\right \rfloor +1$,  $x_{j,i_j}=m (i_j-1), i_j=1,2,\cdots,N, j=1,2,\cdots,9.$ The `dark' set is defined as
	$$y_{i_1,i_2,\cdots,i_9} = -1, {\text{ if }} x_{1,i_1},x_{2,i_2},\cdots,x_{9,i_9}<128.$$
	The `light' set is defined as
	$$y_{i_1,i_2,\cdots,i_9} = 1, {\text{ if }} x_{1,i_1},x_{2,i_2},\cdots,x_{9,i_9}\geq 128.$$ 
	Combining the `dark' and `light' sets, we have the training set, which can be normalized by dividing each element $x_{j,i_j}$ by 255. The structure of the neural network is the same as in Section \ref{hd-relu}.
	
	\begin{prop}
		If $H_1\geq 18N+9$, $H_2\geq 2$,  and $H_{2+k}\geq 1$, $1 \leq k \leq \mathbb{M}$, there exist weights and biases such that the loss function is zero. Meanwhile, for any picture with $3\times 3$ pixels, if it's not in the training set, then it can't be classified.
	\end{prop}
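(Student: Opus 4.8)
The plan is to read this as the $d=9$ specialization of the high-dimensional binary classification construction of Section~\ref{hdbc}, the only genuine change being how the two second-layer detector neurons are wired. First I would build the first hidden layer exactly as before: with $9(2N+1)=18N+9$ neurons I can realize all the one-dimensional hat functions $\phi_{j,i_j}$ of \eqref{phi_j_ij} at every grid value $x_{j,i_j}=m(i_j-1)/255$ in all nine (normalized) pixel coordinates, taking the support half-width $h=m/(2\cdot 255)$ so that the hats in each coordinate have disjoint supports and, in fact, the dark-centered and light-centered hats have touching but non-overlapping supports (the largest dark grid value and smallest light grid value are consecutive multiples of $m$). This accounts for the bound $H_1\ge 18N+9$.

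The point of departure from \eqref{dd1neuron}--\eqref{dd2neuron} is that ``dark'' and ``light'' are defined by a conjunction over all nine pixels rather than a single-coordinate threshold, so both detectors must range over every dimension. I would set the light detector to
$$I_1(\mathbf{x}) = \sum_{j=1}^{9}\ \sum_{i_j:\, x_{j,i_j}\ge 128/255} \phi_{j,i_j}(x_j) - b_1,$$
take $I_2$ to be its symmetric counterpart summing the dark-centered hats ($x_{j,i_j}<128/255$), fix $b_1=b_2=b\in[8,9)$, and let $f_2(\mathbf{x}) = a(I_1(\mathbf{x}))/(9-b) - a(I_2(\mathbf{x}))/(9-b)$. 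Since each inner sum is at most $1$, the total is at most $9$ and reaches $9$ only at a grid point all of whose coordinates are light (resp. dark); there $I_1 = 9-b>0$ (resp. $I_2=9-b>0$) while the opposite detector vanishes by disjointness of supports. Hence $f_2=+1$ on every light training point and $-1$ on every dark one, so the loss is zero, and Proposition~\ref{deeper} extends this to the $2+\mathbb{M}$ layer network with $H_{2+k}\ge 1$.

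For the failure claim I would split the non-training pictures into two cases. A color-reduced picture sitting at a grid point but with a \emph{mixed} pattern, say $k$ light and $9-k$ dark coordinates with $1\le k\le 8$, gives $I_1=k-b\le 8-b\le 0$ and likewise $I_2\le 0$, so $f_2=0$ for any $b\ge 8$ and the image receives neither label. For a picture with at least one off-grid pixel I would invoke the support-shrinkage estimate of Section~\ref{hdbc}: $f_2(\mathbf{x})\ne 0$ forces $\phi_{j,i_j}(x_j)\ge b-8$ in every coordinate, i.e.\ $|x_j-x_{j,i_j}|\le h(9-b)$ for a common grid configuration. Since an off-grid integer pixel is at distance at least $1$ from every grid value while $h(9-b)=\tfrac{m}{2}(9-b)$ in pixel units, choosing $b$ close enough to $9$ (it suffices that $9-b<2/m$, which is compatible with $2\le m\le 5$) makes $h(9-b)<1$; that coordinate then lies outside every shrunken support and $f_2=0$. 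Combining the cases, every $3\times 3$ picture other than an exact training point evaluates to $0$ and cannot be classified.

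The main obstacle I anticipate is precisely the mismatch with the training set of Section~\ref{hdbc}: because the labels are a conjunction over all nine pixels, the mixed grid points are genuinely absent from the data, so I must check both that the conjunction-detectors fire \emph{exactly} on the intended points (handled by $b\in[8,9)$ together with disjoint dark/light supports) and that the failure argument simultaneously covers mixed grid points and off-grid pictures. The one quantitative step requiring care is calibrating the support half-width $h(9-b)$ against the integer pixel spacing, so that genuinely distinct images always fall outside all shrunken supports; the rest is a routine $d=9$ instance of the earlier construction.
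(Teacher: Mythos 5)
Your construction is correct and essentially identical to the paper's: the same $9(2N+1)$-neuron first hidden layer, the same two conjunction detectors \eqref{dd1neurons}--\eqref{dd2neurons} summing the dark- (resp.\ light-) centered hats over all nine coordinates with bias $b\in[8,9)$, the same output $\bigl(a(I_1)-a(I_2)\bigr)/(9-b)$, and the same extension to $2+\mathbb{M}$ layers -- up to a harmless swap of which detector carries the positive sign (yours is actually the one consistent with the labels $\pm 1$). The only difference is one of rigor, not of route: where the paper simply asserts that ``for $b$ close enough to $9$ the model can only recognize images within the training set,'' you prove it, splitting non-training pictures into mixed grid points (where $I_1,I_2\le 8-b\le 0$ for every $b\in[8,9)$) and pictures with an off-grid pixel (handled by the calibration $9-b<2/m$ so the shrunken support half-width $\tfrac{m}{2}(9-b)$ falls below the one-pixel-unit gap), which is a welcome sharpening of the paper's argument.
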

	\begin{proof}
		Let $2h = m/255 = x_{j,i_j+1}-x_{j,i_j},$ where $2\leq m\leq 5$, for the first hidden layer, we need $9(2N+1)$ neurons, and the inputs are similar to those in Section \ref{hdbc}.
		For the second hidden layer, we only need 2 neurons; however, the inputs differ from those in Section \ref{hdbc}.
		The input of the first neuron is:
		\begin{equation}\label{dd1neurons}
			I_1({\bf x}) := 
			\sum_{j=1}^{9} 
			\sum_{x_{j,i_j}< 0.5}^{N}\phi_{j,i_j}
			-b_1
		\end{equation}
		where $b_1$ is the bias. $I_1(x)$ is the linear combination of the output from the first hidden layer. Similarly, let $b_2$ be the bias, we have the input for the second neuron:
		\begin{equation}\label{dd2neurons}
			I_2({\bf x}) := 
			\sum_{j=1}^{9} 
			\sum_{x_{j,i_j}\geq 0.5}^{N}\phi_{j,i_j}
			-b_2
		\end{equation}  
		Let $b=b_1=b_2 \in [8,9)$, then with $\mathbb{M}$ more hidden layers, the final output can be:
		$$
		f_{2+\mathbb{M}}({\bf x}) := 
		\frac{a(I_1({\bf x}))}{9-b}
		+(-1)\frac{a(I_2({\bf x}))}{9-b}.
		$$ 
		For $b$ close enough to 9, the model can only recognize images within the training set, even if the images differ by only one pixel with a one-unit grayscale variation.
	\end{proof}
	\begin{remark}
		The number of all `dark' and `light' images is $2\times 128^9$, so even when considering the training set, for $m=2$, the accuracy of the constructed model is as low as $64^9/128^9 = 1/512.$
	\end{remark}
	For simplicity, assume the network has just two hidden layers. We can compare the size of the training set and the number of neurons for $m=2, 5, 10$, as described in \eqref{grayscale}, in the following table:
	\begin{center}
		\begin{tabular}{l*{3}{c}}
			$m$                   & 2    & 5    & 10   \\
			\hline
			Training Set size         & $2\times 64^9$  &$2\times 26^9$  & $2\times 13^9$   \\
			First hidden layer    & 2313                   & 945      & 477   \\
			Second hidden layer   & 2                      & 2       & 2 
		\end{tabular}
	\end{center}
	The dataset is huge, and the number of neurons is relatively small. However, our model will still fail to reach acceptable accuracy, even with the global optimal solution and more hidden layers.

	\subsubsection{High Dimensional Function approximation}
	To approximate a continuous function $g({\bf x})$, where ${\bf x}\in [0,1]^d$, using a ReLU neural network with $2+\mathbb{M}$ hidden layers, we define the training set $\Xi$ as in \eqref{Txy}, 
	where $y_{i_1,i_2,\cdots,i_d} = 
	g(x_{1,i_1},x_{2,i_2},\cdots, x_{d,i_d})$, and $x_{j,1}=0,  x_{j,N}=1$, $j=1,2,\cdots,d$. The values  $x_{j,i_j}\in [0,1]$ are distributed uniformly. The cost functions are MSE  or MAE errors. Using the notations in Section \ref{hdbc}, we have the following proposition.
	\begin{prop}
		If $H_1\geq dN+2d$, $H_2\geq N^d$,  and $H_{2+k}\geq 1$, $1 \leq k \leq \mathbb{M}$, there exist weights and biases such that the loss function is zero. Meanwhile, for any $\epsilon > 0$ small enough, if $\text{dis}({\bf x},\Xi_{\bf x}) > \epsilon$, then $f_{2+\mathbb{M}}({\bf x}) = 0$,  which means if $g({\bf x})\not= 0$, the approximation is poor.
	\end{prop}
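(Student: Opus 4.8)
The plan is to mirror the two-dimensional function-approximation construction, replacing the height-$2$ bump by the height-$d$ bump
\begin{equation*}
\Phi_{i_1,\cdots,i_d}(\mathbf{x}) = \sum_{j=1}^{d}\phi_{j,i_j}(x_j),
\end{equation*}
with $\phi_{j,i_j}$ as in \eqref{phi_j_ij} and overlapping supports, so that the grid spacing equals $h=1/(N-1)$ in each coordinate; then $\phi$ has support $[-h,h]$ and satisfies the nodal property $\phi_{j,i'_j}(x_{j,i_j})=\delta_{i_j,i'_j}$. First I would count the widths. In each dimension the $N$ hat functions $\phi_{j,1},\dots,\phi_{j,N}$ share shifts through the identity $x_j-x_{j,i_j+1}=x_j-x_{j,i_j}-h$, so only $N+2$ distinct first-layer inputs are needed per coordinate, giving $H_1=d(N+2)=dN+2d$; the second layer then uses one neuron per grid point to form $a(\Phi_{i_1,\cdots,i_d}-b)$, which accounts for the $N^d$ neurons. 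The output is taken to be
\begin{equation*}
f_2(\mathbf{x}) := \sum_{i_1,\cdots,i_d=1}^{N} y_{i_1,\cdots,i_d}\,\frac{a\big(\Phi_{i_1,\cdots,i_d}(\mathbf{x})-b\big)}{d-b},\qquad b\in[d-1,d).
\end{equation*}

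The central step is verifying zero loss. Evaluating at a training node $\mathbf{x}^*=(x_{1,i_1},\cdots,x_{d,i_d})$, the nodal property gives $\Phi_{i'_1,\cdots,i'_d}(\mathbf{x}^*)=\sum_{j=1}^{d}\delta_{i_j,i'_j}$, which equals $d$ exactly when the whole index tuple matches and is at most $d-1\le b$ otherwise. Because $b\in[d-1,d)$, every non-matching term is annihilated by the ReLU $a(\cdot)$, so only the matching term survives and $f_2(\mathbf{x}^*)=y_{i_1,\cdots,i_d}\,(d-b)/(d-b)=g(\mathbf{x}^*)$; hence the cost is zero and the configuration is a global minimizer for both MSE and MAE. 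The choice $b\in[d-1,d)$ is the \emph{crux}: it is calibrated so that at each node precisely one of the $N^d$ bumps clears the threshold.

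Next I would bound the support. For $a(\Phi_{i'_1,\cdots,i'_d}(\mathbf{x})-b)>0$ we need $\sum_{j}\phi_{j,i'_j}(x_j)>b$; since each summand is $\le 1$, every coordinate must satisfy $\phi_{j,i'_j}(x_j)>b-d+1$, i.e.\ $|x_j-x_{j,i'_j}|<(d-b)h$ by the explicit triangular shape of $\phi$. Thus the nonzero set of $f_2$ lies in boxes of half-width $(d-b)h$ about the nodes, which shrink as $b\to d$; choosing $b$ close enough to $d$ makes this region fit inside the $\epsilon$-neighborhood of $\Xi_{\bf x}$, so $\mathrm{dis}(\mathbf{x},\Xi_{\bf x})>\epsilon$ forces $f_2(\mathbf{x})=0$, and the approximation is poor wherever $g\neq 0$. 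Finally, since $g$ and hence $f_2$ are bounded, Proposition \ref{deeper} lets me append $\mathbb{M}$ ReLU layers of width one with $f_{2+\mathbb{M}}=f_2$, covering the hypotheses $H_{2+k}\ge 1$.

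The main obstacle is the zero-loss verification in the high-dimensional setting: one must confirm that the partial-matching values $\sum_{j}\delta_{i_j,i'_j}$ of the $d$-dimensional bump never exceed $d-1$ off the diagonal, so that the single threshold $b$ simultaneously interpolates all $N^d$ data while suppressing every competing bump. The argument goes through cleanly here only because the bumps are additive across coordinates and the nodal values are integers, which is exactly what lets the constant $b$ separate the lone full match from all $d-1$ partial matches.
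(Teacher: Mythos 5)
Your construction is exactly the paper's: the same additive bump $\Phi_{i_1,\cdots,i_d}({\bf x})=\sum_{j=1}^d \phi_{j,i_j}$, the same width counts $H_1=d(N+2)$ and $H_2=N^d$, the same thresholded output $\sum y_{i_1,\cdots,i_d}\,a(\Phi_{i_1,\cdots,i_d}({\bf x})-b)/(d-b)$ with $b\in[d-1,d)$, and the same appeal to Proposition \ref{deeper} to append the $\mathbb{M}$ width-one layers. The only difference is that you make explicit two steps the paper leaves implicit — the nodal verification that exactly one bump clears the threshold at each training point (giving zero loss) and the coordinatewise bound $|x_j-x_{j,i'_j}|<(d-b)h$ on the support — so your write-up is a correct, somewhat more detailed rendering of the same argument.
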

	\begin{proof}
		Let $h$ be $x_{j,i_j+1}-x_{j,i_j}$. We define the ``basis function" as: 
		\begin{equation}\label{ddhat2}
			\Phi_{i_1,i_2,\cdots,i_d}({\bf x}) = \sum_{j=1}^{d} 
			\phi_{j,i_j},
		\end{equation}
		where $\phi_{j,i_j}$ is introduced in \eqref{phi_j_ij}, and $\Phi_{i_1,i_2,\cdots,i_d}({\bf x})$ has a maximum height of $d$.
		
		The inputs for the first hidden layer are similar to those in the binary classification problem. However, we only need $d(N+2)$ neurons, which is fewer. 
		Then we need $N^d$ neurons in the second hidden layer, which is significantly more, and the final output would be:
		$$
		f_{2+\mathbb{M}}({\bf x}) := \sum_{i_1,i_2,\cdots,i_d=1}^{N}y_{i_1,i_2,\cdots,i_d}
		\frac{a(\Phi_{i_1,i_2,\cdots,i_d}({\bf x})-b)}
		{d-b},
		$$ 
		where $b\in [d-1,d)$. As $b$ approaches $d$, the approximation will fail even though $f_{2+\mathbb{M}}({\bf x})$ represents the global optimal solution. 
	\end{proof}
	
	\section{Constructions for Networks with Parametric ReLU Activation Functions}\label{sec4}
	In this section, we will show how to construct the global optimal solutions for networks with Parametric ReLU  activation functions, which are proposed in \cite{he2015delving}. The cost functions can be MSE or MAE errors. The Parametric ReLU activation function is denoted as:
	\begin{equation}\label{prelu}
		\sigma (x)  = 
		\begin{cases} 
			\alpha {x} & \text{if } x < 0 \\
			x          & \text{if } x \geq 0
		\end{cases}
	\end{equation}
	where $\alpha \not = 1$ can be positive, negative or $0$. If $\alpha = 0$, then \eqref{prelu} becomes ReLU activation function; if $\alpha = 0.01$, \eqref{prelu} is the Leaky ReLU activation function \cite{maas2013rectifier}, see Figure \ref{fig:prelua}. Let $q(x) = \sigma(x)-\sigma(x-c),$ $c>0$, we can plot its graph as shown in Figure \ref{fig:prelub}, where $c=1$. Furthermore, by subtracting $\alpha c$ from $q(x)$ and scaling the result by dividing it by the factor $1-\alpha$, we can define a function $a(x)$ as
	\begin{equation}\label{prelu2}
		a(x)  :=\frac{\sigma (x)-\sigma (x-c)-\alpha c}{1-\alpha} = 
		\begin{cases} 
			0   & \text{if } x < 0 \\
			x   & \text{if } 0\leq x<c\\
			c   & \text{if } x \geq c
		\end{cases},
	\end{equation}
	see Figure \ref{fig:preluc}.
	Also, we can define the basis function similar as ReLU activation function in 1-D:
	\begin{equation}\label{prelu3}
		\phi(x)  
		:= \frac{\sigma (x+h)-2\sigma(x)+\sigma (x-h)}{(1-\alpha)h}
		=	\begin{cases} 
			0 		           & \text{if } x < -h \\
			1+\frac{x}{h}      & \text{if } -h\leq x < 0\\
			1-\frac{x}{h}      & \text{if } 0\leq x < h \\
			0                  & \text{if } x \geq h
		\end{cases},
	\end{equation}
	where $h>0$, see Figure \ref{fig:prelud}.
	\begin{figure}
		\centering
		\subfigure[][]{%
			\label{fig:prelua}%
			\includegraphics[width=0.35\linewidth]{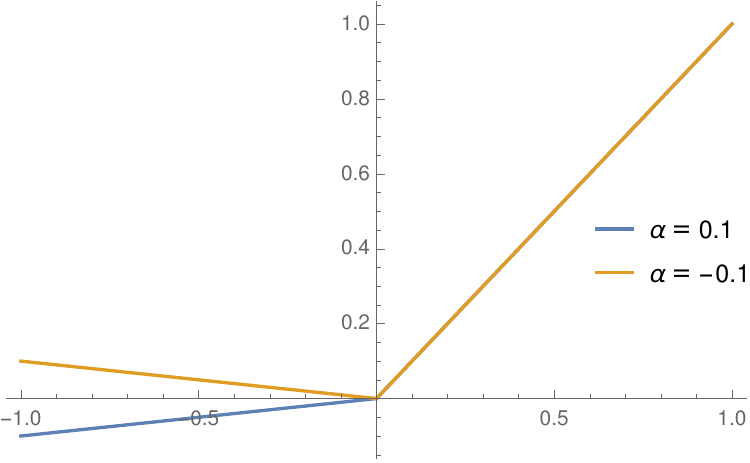}}%
		\hspace{8pt}%
		\subfigure[][]{%
			\label{fig:prelub}%
			\includegraphics[width=0.35\linewidth]{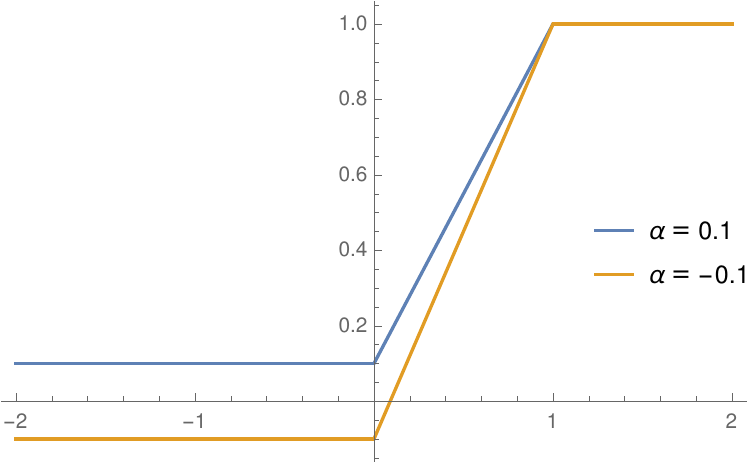}} \\
		\subfigure[][]{%
			\label{fig:preluc}%
			\includegraphics[width=0.35\linewidth]{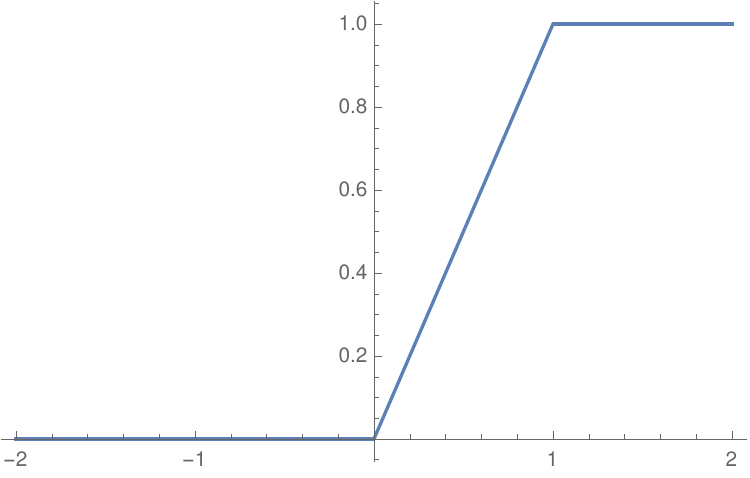}}%
		\hspace{8pt}%
		\subfigure[][]{%
			\label{fig:prelud}%
			\includegraphics[width=0.35\linewidth]{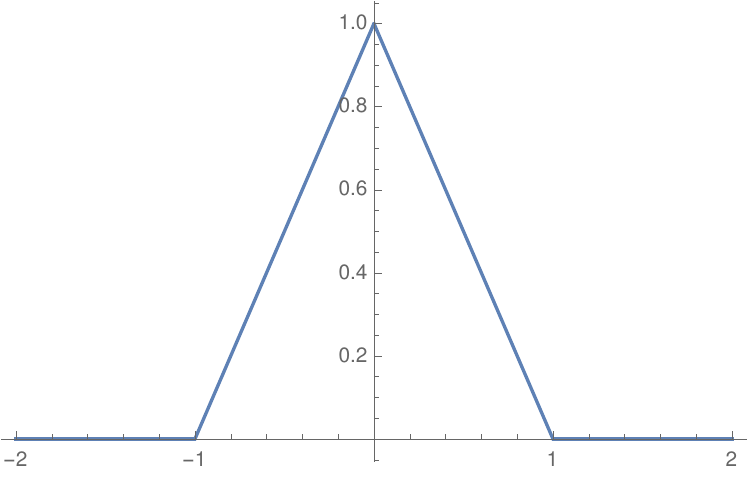}}%
		\caption[2d ``basis"]{
			\subref{fig:prelua} Graphs of two Parametric ReLU activation functions;
			\subref{fig:prelub} Graphs of $\sigma(x)-\sigma(x-1)$ with different $\alpha$;
			\subref{fig:preluc} Graph of $a(x)$ with $c=1$, $\alpha \not = 1$;
			\subref{fig:prelud} Graph of basis function $\phi(x)$ with $h=1,$ $\alpha \not = 1$.
		}
		\label{fig:prelu}%
	\end{figure}
	Here we have the building blocks $\phi(x)$ and $a(x)$ for constructing the optimal global solution of fully connected deep neural networks with Parametric ReLU activation functions. Compared with ReLU networks, the main differences in  formulating examples, which can fail the networks, lie in the second hidden layer.
	
	Similar to Section \ref{1de}, we consider a fully connected neural network with $2+\mathbb{M}$ hidden layers  and one output neuron with linear activation function. 
	We define the width of the first hidden layer as $H_1$, the width of the second hidden layer as $H_2$, and the width of the $(2+k)$th hidden layer as $H_{2+k}$, where $1 \leq k \leq \mathbb{M}$, $\mathbb{M}$ can be any positive integer. 
	
	\subsection{One Dimensional Binary Classification Problem}
	The binary classification problem is the same as in Section \ref{1dclass}. Let $x_1 = 0, x_N=1$ and $x_i$ distribute uniformly in $[0,1]$ with correct label, and the output of the neural network be $f_{2+\mathbb{M}}(x)$. 
	\begin{prop}\label{p411}
		If $H_1\geq 2N+1$, $H_2\geq 4$, and $H_{2+k}\geq 1$, $1 \leq k \leq \mathbb{M}$, there exist weights and biases such that the loss function is zero. Meanwhile, $\forall \epsilon > 0$ small enough, if $\text{dis}(x,\{x_i\}_{i=1}^{N}) > \epsilon$, then $f_{2+\mathbb{M}}(x) = 0$.
	\end{prop}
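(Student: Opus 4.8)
The plan is to transplant the ReLU construction of Section~\ref{1dclass} into the Parametric ReLU setting, using the hat function $\phi(x)$ from \eqref{prelu3} and the clipping map $a(x)$ from \eqref{prelu2} in place of the ReLU hat and rectifier. The first hidden layer is handled exactly as before: with $h=1/(2(N-1))$ and the coincidence $x-x_{i+1}+h=x-x_i-h$, I would route the $2N+1$ distinct affine inputs $x-x_i$ and $x-x_i\pm h$ through $\sigma$ and recombine them (in the weights feeding the next layer) according to \eqref{prelu3} to obtain $N$ hat functions $\phi_i(x)$ of height $1$ with pairwise disjoint supports $[x_i-h,x_i+h]$. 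This step is essentially unchanged from the ReLU case, since $\phi$ in \eqref{prelu3} is already written entirely through $\sigma$.

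The genuinely new ingredient, and the reason $H_2\geq 4$ rather than $2$, lives in the second hidden layer. Because $\sigma$ does not annihilate negative inputs, I cannot rectify with a single neuron the way ReLU does; instead I would realize the clip $a(\cdot)$ of \eqref{prelu2} for each of the two class-sums. Concretely, I form $I_1(x)=\big(\sum_{x_i\geq 0.5}\phi_i(x)\big)-b_1$ and $I_2(x)=\big(\sum_{x_i<0.5}\phi_i(x)\big)-b_2$ as in \eqref{1d1neuron}--\eqref{1d2neuron}, feed the four quantities $I_1,\,I_1-c,\,I_2,\,I_2-c$ into four $\sigma$-neurons, and reconstruct $a(I_1)$ and $a(I_2)$ through the linear combination $\big(\sigma(I_k)-\sigma(I_k-c)-\alpha c\big)/(1-\alpha)$ performed in the output layer, with $b_1,b_2\in[0,1)$ and $c\geq 1$. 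The final output is then
\[
f_2(x)=\frac{a(I_1(x))}{1-b_1}-\frac{a(I_2(x))}{1-b_2}.
\]

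To confirm a global optimum I would evaluate at the samples: at $x_i\geq 0.5$ only $\phi_i$ is active, so $I_1(x_i)=1-b_1\in(0,c]$ gives $a(I_1(x_i))=1-b_1$ while $I_2(x_i)=-b_2<0$ gives $a(I_2(x_i))=0$, whence $f_2(x_i)=1$; the symmetric computation yields $f_2(x_i)=-1$ for $x_i<0.5$, so the loss is zero. For the failure claim, the disjoint-support structure forces $I_1(x),I_2(x)\leq 0$, hence $f_2(x)=0$, whenever $\text{dis}(x,\{x_i\}_{i=1}^N)\geq (1-b)h$, since $\phi_i(x)>b$ only within distance $(1-b)h$ of $x_i$; taking $b=b_1=b_2\to 1$ shrinks this radius below any prescribed $\epsilon$. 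Finally, because $f_2$ is bounded and $\sigma$ is ReLU-like (linear with unit slope for positive arguments), I would append the $\mathbb{M}$ width-one layers exactly as in Proposition~\ref{deeper} and Theorem~\ref{th2.1}, choosing $c$ large enough that every $p_n>0$ so the identity propagates, giving $f_{2+\mathbb{M}}(x)=f_2(x)$. The only real obstacle is the second-layer rectification: checking that the two-neuron realization of the clip reproduces the ReLU spike exactly and that the output linear combination stays consistent for every $\alpha\neq 1$; everything else mirrors the ReLU argument.
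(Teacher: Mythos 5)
Your proposal is correct and follows essentially the same route as the paper's proof: the same disjoint-support hat functions $\phi_i$ from \eqref{prelu3} in the first layer, the same four second-layer neurons fed with $I_1,\ I_1-c,\ I_2,\ I_2-c$ recombined via \eqref{prelu2} to emulate ReLU clipping (your $c\geq 1$ is equivalent to the paper's $c\geq\max_x I_k(x)$ since $\max_x I_k(x)\leq 1-b<1$), and the same identity-propagation through the extra $\mathbb{M}$ layers by shifting $f_2$ into the linear regime of the Parametric ReLU. Your explicit verification of the sample values and of the support radius $(1-b)h$ only makes the argument slightly more detailed than the paper's.
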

	\begin{proof}
		Let $2h = x_{i+1}-x_{i}$, and we define
		\begin{equation}\label{1dhatpl}
			\phi_i(x) = \phi(x-x_i),
		\end{equation}
		where $\phi(x)$ is from \eqref{prelu3}. For the first hidden layer, we need $2N+1$ neurons, each neuron has distinct input as: $x-x_i$ or $x-x_i-h$ or $x-x_i+h$, $i=1,2,\cdots,N$. From \eqref{prelu3} and \eqref{1dhatpl}, we can build basis functions $\phi_i(x), i=1,2,\cdots,N$. 
		
		Then we define the  linear combinations of first hidden layer's outputs as:
		\begin{equation}\label{1d1neuron-pl}
			I_1(x) := \sum_{i\ {\rm for} \ x_i\geq 0.5} \phi_i(x) -b_1,
		\end{equation}
		where $b_1$ is the bias.  Similarly, we have 
		\begin{equation}\label{1d2neuron-pl}
			I_2(x) := \sum_{i\ {\rm for} \ x_i< 0.5} \phi_i(x)  -b_2.
		\end{equation}  
		For the second hidden layer, we need at least 4 neurons. The inputs of the first and second neurons are: $I_1(x)$ and $I_1(x) - c$, where $c$ must be large enough, i.e., $c \geq \max_{x}(I_1(x))$. The inputs of the third and fourth neurons are: $I_2(x)$ and $I_2(x) - c$, where $c \geq \max_{x}(I_2(x))$.
		
		Let $b = b_1 = b_2 \in [0,1)$, then let the output of the second hidden layer be:
		$$
		f_2(x) := \frac{a(I_1(x))}{1-b}+(-1)\frac{a(I_2(x))}{1-b},
		$$ 
		where $a(\cdot)$ is from \eqref{prelu2}. Take $I_1(x)$ for example. Since $c$ is greater than or equal to $I_1(x)$, we have
		\begin{equation}\label{prelu2I}
			a(I_1(x))   = 
			\begin{cases} 
				0   & \text{if } I_1(x) < 0 \\
				I_1(x)   & \text{if } I_1(x) \geq 0 
			\end{cases}
		\end{equation}
		where $x\in [0,1]$. So, if $c$ is big enough, $a(\cdot)$ plays the same role as the ReLU activation function in truncating the negative part of  $I_1(x)$. As $b$ approaches 1, the measure of the compact support of $f_2(x)$ will shrink to 0. However, $f_2(x_i)$, where $i=1,2,\cdots, N,$ has the correct label.
		
		If we add more hidden layers to the network, then we can pass $f_2(x)+\mathfrak{C},$ where $\mathfrak{C}>1$, to a neuron  in the later hidden layers. From the definition of the Parametric ReLU function, we have 
		$$\sigma(f_2(x)+\mathfrak{C}) = f_2(x)+\mathfrak{C}.$$ 
		The value of $f_2(x)+\mathfrak{C}$ won't be changed. Then, at the final step, we subtract $c_2$ from it, which gives us:
		$$
		f_{2+\mathbb{M}}(x) = f_{2}(x),
		$$
		where $f_{2+\mathbb{M}}(x)$ is the output of the deep neural network.
	\end{proof}

	\subsection{One Dimensional Function Approximation}
	Similar to Section \ref{1dfunc}, we use the Parametric ReLU network to approximate a continuous function $g(x)$, $x\in [0,1]$. The training set is $\{x_i,y_i\}_{i=1}^N,$ where $y_i=g(x_i)$. The cost functions are MSE  or MAE errors.  
	\begin{prop}
		If $H_1\geq N+2$, $H_2\geq 2N$, and $H_{2+k}\geq 1$, $1 \leq k \leq \mathbb{M}$, there exist weights and biases such that the loss function is zero. Meanwhile, $\forall \epsilon > 0$ small enough, if $\text{dis}(x,\{x_i\}_{i=1}^{N}) > \epsilon$, then $f_{2+\mathbb{M}}(x) = 0$, indicating if $g(x)\not= 0$, the approximation is poor.
	\end{prop}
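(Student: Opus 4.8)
The plan is to transport the ReLU function-approximation construction of Section~\ref{1dfunc} to the Parametric ReLU setting, the only structural change being that every clipped activation $a(\cdot)$ from \eqref{prelu2} must be realized by a \emph{pair} of Parametric ReLU neurons rather than a single ReLU. This doubling is precisely why the second hidden layer is allotted $H_2\ge 2N$ neurons here instead of the $N$ used in Section~\ref{1dfunc}, while the first hidden layer is unchanged at $H_1\ge N+2$.

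First I would fix $h=x_{i+1}-x_i=1/(N-1)$ (overlapping supports) and set $\phi_i(x)=\phi(x-x_i)$ via \eqref{prelu3}, \eqref{1dhatpl}. As in Section~\ref{1dfunc}, the shifts $x-x_i+h,\ x-x_i,\ x-x_i-h$ collapse onto only $N+2$ distinct knots (the $N$ grid points together with the two outer knots $-h$ and $1+h$), so $N+2$ first-layer Parametric ReLU neurons suffice to produce, by the linear combination \eqref{prelu3}, all the hats $\phi_1,\dots,\phi_N$ at the inputs of the second layer.

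Next I would build the second hidden layer: for each $i$ allocate two neurons with inputs $\phi_i(x)-b$ and $\phi_i(x)-b-c$, so that the combination $\frac{\sigma(\phi_i(x)-b)-\sigma(\phi_i(x)-b-c)-\alpha c}{1-\alpha}$ realizes $a(\phi_i(x)-b)$ exactly. Choosing $c\ge 1$ forces $\phi_i(x)-b\le 1-b<c$ on $[0,1]$, so the upper clip never activates and, as in \eqref{prelu2I}, $a(\phi_i(x)-b)=\max\{0,\phi_i(x)-b\}$ behaves like a plain ReLU. Setting
\[
f_2(x):=\sum_{i=1}^{N} y_i\,\frac{a(\phi_i(x)-b)}{1-b},\qquad b\in[0,1),
\]
I would then check both assertions. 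For zero loss: the hat shape gives $\phi_j(x_i)=\delta_{ij}$ at grid points (even though neighboring supports overlap between grid points), so $f_2(x_i)=y_i(1-b)/(1-b)=y_i$ and hence $C_{mae}=C_{mse}=0$. For failure off the grid: the support of $a(\phi_i(x)-b)$ is $\{\phi_i(x)\ge b\}$, an interval of half-width $(1-b)h$ about $x_i$; given any small $\epsilon>0$, taking $b$ with $(1-b)h<\epsilon$ makes every term vanish wherever $\text{dis}(x,\{x_i\}_{i=1}^N)>\epsilon$, so $f_2(x)=0$ there and a nonzero $g$ is approximated poorly.

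Finally I would extend to depth $2+\mathbb{M}$ with width-one layers exactly as in Proposition~\ref{p411}: since at most two hats overlap at any $x$, $f_2$ is bounded, so for $\mathfrak{C}$ large enough $f_2(x)+\mathfrak{C}>0$; feeding this through one Parametric ReLU neuron per added layer acts as the identity, $\sigma(f_2+\mathfrak{C})=f_2+\mathfrak{C}$, and subtracting the constant at the end yields $f_{2+\mathbb{M}}(x)=f_2(x)$, so each $H_{2+k}\ge 1$ suffices. The step requiring the most care is the second-layer bookkeeping — verifying that the $2N$ Parametric ReLU neurons faithfully reproduce the $N$ clipped evaluations $a(\phi_i(x)-b)$ and that the choice $c\ge 1-b$ keeps the upper clip inactive; everything else is the ReLU argument of Section~\ref{1dfunc} transported essentially verbatim.
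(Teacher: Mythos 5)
Your proposal is correct and follows essentially the same construction as the paper: the same first layer of $N+2$ neurons building the hats $\phi_i$ from \eqref{prelu3}, the same pairing of $2N$ second-layer neurons to realize each clipped activation $a(\phi_i(x)-b)$ from \eqref{prelu2} with $c$ large enough, the same output $\sum_i y_i\,a(\phi_i(x)-b)/(1-b)$, and the same depth extension via the shift-by-$\mathfrak{C}$ trick of Proposition \ref{p411}. The only difference is that you spell out details the paper leaves implicit (the $\delta_{ij}$ check for zero loss, the $(1-b)h$ support width, and the explicit two-neuron realization of $a(\cdot)$), which is a welcome elaboration rather than a different route.
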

	\begin{proof}
		Since $x_i \in [0,1]$ is distributed uniformly, with $x_1 = 0, x_N=1$, let $h = x_{i+1}-x_{i}$, the basis function is  
		\begin{equation}\label{1dhat2p}
			\phi_i(x) = \phi(x-x_i),
		\end{equation}
		where $\phi(x)$ is defined in \eqref{prelu3}. For the first hidden layer, we need $N+2$ neurons, with each neuron having a distinct input: $x-x_i$ or $x-x_i-h$ or $x-x_i+h$, $i=1,2,\cdots,N$. This allows us to build the basis functions $\phi_i(x)$. 
		Then we need $2N$ neurons in the second hidden layer. Following the construction in Proposition \ref{p411}, the final output is:
		$$
		f_{2+\mathbb{M}}(x) := \sum_{i=1}^{N}y_i\, \frac{a(\phi_i(x)-b)}{1-b},
		$$ 
		where $a(\cdot)$ is denoted in \eqref{prelu2}, $c$ in $a(\cdot)$ should be large enough, and $b\in [0,1)$. So as $b\rightarrow 1$, the approximation will fail, even though $f_{2+\mathbb{M}}(x)$ is the global optimal solution. 
	\end{proof}
	
	\subsection{High Dimensional Problems}
	We employ the same settings and notations for high-dimensional binary classification and function approximation problems as in Section \ref{hd-relu}, except for the basis function $\phi(x)$ and the activation function. For Parametric ReLU, we have the following propositions.
	\begin{prop}
		For binary classification problem, if $H_1\geq 2dN+d$, $H_2\geq 4$,  and $H_{2+k}\geq 1$, $1 \leq k \leq \mathbb{M}$, there exist weights and biases such that the loss function is zero. Meanwhile, for any $\epsilon > 0$ small enough, if $\text{dis}({\bf x},\Xi_{\bf x}) > \epsilon$, then $f_{2+\mathbb{M}}({\bf x}) = 0$.
	\end{prop}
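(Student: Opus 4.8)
The plan is to combine the first-layer basis construction from the ReLU high-dimensional classifier of Section \ref{hdbc} with the second-layer clipping device from the one-dimensional Parametric ReLU case, Proposition \ref{p411}. First I would set $x_{j,i_j+1}-x_{j,i_j}=2h$ and populate the first hidden layer with $d(2N+1)=2dN+d$ neurons, one for each distinct affine input $x_j-x_{j,i_j}$, $x_j-x_{j,i_j}-h$, $x_j-x_{j,i_j}+h$ across the $d$ coordinates; exploiting the overlap $x_j-x_{j,i_j+1}+h=x_j-x_{j,i_j}-h$ keeps the count at $2N+1$ per dimension. Feeding these through \eqref{prelu3} realizes every one-dimensional basis function $\phi_{j,i_j}$, and hence the $d$-dimensional sums used below, exactly as in the ReLU case but with the Parametric ReLU basis.

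Next I would form the two linear combinations $I_1({\bf x})$ and $I_2({\bf x})$ as in \eqref{dd1neuron}--\eqref{dd2neuron}, taking $b=b_1=b_2\in[d-1,d)$ and selecting the `light' first-coordinate basis functions for $I_1$ and the `dark' ones for $I_2$. The one place where Parametric ReLU departs from ReLU is that a single neuron no longer truncates the negative part, so here I would spend four neurons in the second hidden layer, a pair for each of $I_1$ and $I_2$, with inputs $I_1,\ I_1-c$ and $I_2,\ I_2-c$, assembled through \eqref{prelu2} into the clipped functions $a(I_1)$ and $a(I_2)$. Choosing $c\geq\max_{\bf x}I_1({\bf x})$ and $c\geq\max_{\bf x}I_2({\bf x})$ keeps the upper clip in \eqref{prelu2} inactive on $[0,1]^d$, so that $a(\cdot)$ collapses to the ReLU-style truncation $\max(\cdot,0)$ there, which is precisely what the ReLU construction relied on.

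The output is then $f_2({\bf x})=a(I_1({\bf x}))/(d-b)-a(I_2({\bf x}))/(d-b)$, mirroring the ReLU formula. At each sample the non-overlapping supports force exactly one term in each coordinate-sum to equal its peak, so $I_1=d-b$ at a `light' sample and $I_2=d-b$ at a `dark' one, giving $f_2=+1$ and $-1$ respectively; hence the loss vanishes. Since every coordinate-sum is bounded by $1$, one also gets $|f_2|\leq 1$ throughout, and the same computation as in the two-dimensional proposition shows the compact support around each sample shrinks to $0$ as $b\to d$, so that $f_2({\bf x})=0$ once $\text{dis}({\bf x},\Xi_{\bf x})>\epsilon$.

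Finally, to extend to $2+\mathbb{M}$ layers I would reuse the trick of Proposition \ref{p411}: because $|f_2|\leq 1$, passing $f_2({\bf x})+\mathfrak{C}$ with $\mathfrak{C}>1$ through a Parametric ReLU neuron leaves the value fixed (its argument is positive, where $\sigma$ is the identity), and subtracting $\mathfrak{C}$ at the next step recovers it; iterating across the $\mathbb{M}$ extra layers yields $f_{2+\mathbb{M}}({\bf x})=f_2({\bf x})$. I expect the only genuinely non-routine steps to be the two Parametric-ReLU-specific checks: that $c$ may be taken large enough for $a$ to act as pure truncation over the whole cube, and that the global bound $|f_2|\leq 1$ really holds so the constant $\mathfrak{C}$ in the depth extension is admissible. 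Both are inherited almost verbatim from Proposition \ref{p411} together with the ReLU high-dimensional argument, so the proof should be essentially assembly rather than new analysis.
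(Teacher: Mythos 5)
Your proposal is correct and follows essentially the same route as the paper: the same $d(2N+1)$-neuron first layer building the Parametric ReLU basis functions, the same four-neuron second layer realizing the clipped functions $a(I_1)$ and $a(I_2)$ via the pairs $I_1,\ I_1-c$ and $I_2,\ I_2-c$, the same output formula $\frac{a(I_1)}{d-b}-\frac{a(I_2)}{d-b}$ with $b\in[d-1,d)$, and the same $f_2+\mathfrak{C}$ identity-passing trick from Proposition \ref{p411} for the extra $\mathbb{M}$ layers. Your write-up is in fact slightly more explicit than the paper's (checking zero loss at the samples, the bound $|f_2|\leq 1$, and the support-shrinking argument, all of which the paper leaves implicit), but it is the same construction.
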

	\begin{proof}
		Assume that the second hidden layer has at least 4 neurons. The input of first neuron is:
		\begin{equation}\label{pdd1neuron}
			I_1({\bf x}) := 
			\sum_{x_{1,i_1}\geq 0.5} \phi_{1,i_1}
			+
			\sum_{j=2}^{d} 
			\sum_{ i_j=1}^{N}\phi_{j,i_j}
			-b_1
		\end{equation}
		where $b_1\in [d-1,d)$ is the bias, and $\phi_{j,i_j} = \phi(x_j -x_{j,i_j})$. 
		This ensures that $I_1(x)$ is the linear combination of outputs of the first hidden layer's neurons. The input of the second neuron is $I_1({\bf x}) -c$, where $c$ is large enough. Similarly, the input of the third neuron is:
		\begin{equation}\label{pdd2neuron}
			I_2({\bf x}) := 
			\sum_{x_{1,i_1}< 0.5} \phi_{1,i_1}
			+
			\sum_{j=2}^{d} 
			\sum_{ i_j=1}^{N}\phi_{j,i_j}
			-b_2
		\end{equation}
		where $b_2 \in [d-1,d)$. The input of the fourth neuron is $I_2({\bf x}) -c$, where $c\geq I_2(\bf x)$.
		
		Let $b = b_1=b_2$. Then the final output can be constructed as:
		$$
		f_{2+\mathbb{M}}({\bf x}) := 
		\frac{a(I_1({\bf x}))}{d-b}
		+(-1)\frac{a(I_2({\bf x}))}{d-b},
		$$ 
		where $a(\cdot)$ is defined in \eqref{prelu2}.
		The measure of the compact support for $f({\bf x})$ on $[0,1]^d$ decreases to 0 as $b\rightarrow d$, which concludes the proof. 
	\end{proof}
	\begin{prop}
		For function approximation problem, if $H_1\geq dN+2d$, $H_2\geq 2N^d$,  and $H_{2+k}\geq 1$, $1 \leq k \leq \mathbb{M}$, there exist weights and biases such that the loss function is zero. Meanwhile, for any $\epsilon > 0$ small enough, if $\text{dis}({\bf x},\Xi_{\bf x}) > \epsilon$, then $f_{2+\mathbb{M}}({\bf x}) = 0$,  which means if $g({\bf x})\not= 0$, the approximation is poor.
	\end{prop}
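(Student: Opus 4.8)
\section*{Proof proposal}

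The plan is to fuse the high-dimensional ``basis function'' assembly used for the ReLU approximation proposition with the two-neuron clamped-activation device from Proposition \ref{p411}. I would reuse the $d$-dimensional basis function
$$\Phi_{i_1,i_2,\cdots,i_d}({\bf x}) = \sum_{j=1}^{d}\phi_{j,i_j},$$
exactly as in \eqref{ddhat2}, except that each one-dimensional factor $\phi_{j,i_j}=\phi(x_j-x_{j,i_j})$ is now the Parametric ReLU hat function \eqref{prelu3} rather than the ReLU one. As before $\Phi_{i_1,\cdots,i_d}$ attains its maximum height $d$ only at the lattice point $(x_{1,i_1},\cdots,x_{d,i_d})$ and takes a value $\leq d-1$ at every other training node, because the one-dimensional hats satisfy $\phi_{j,i_j}(x_{j,k})=1$ for $k=i_j$ and $\phi_{j,i_j}(x_{j,k})=0$ otherwise.

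For the first hidden layer I would use $d(N+2)$ Parametric ReLU neurons whose distinct inputs are $x_j-x_{j,i_j}$ and $x_j-x_{j,i_j}\pm h$, reconstructing all the one-dimensional bases $\phi_{j,i_j}$ from \eqref{prelu3}; this is the same count $d(N+2)$ as in the ReLU case. The essential difference appears in the second hidden layer: since the clamped activation $a(\cdot)$ of \eqref{prelu2} is itself assembled from two Parametric ReLU units, realizing each term $a(\Phi_{i_1,\cdots,i_d}({\bf x})-b)$ consumes a \emph{pair} of neurons, with inputs $\Phi_{i_1,\cdots,i_d}({\bf x})-b$ and $\Phi_{i_1,\cdots,i_d}({\bf x})-b-c$ where $c\geq\max_{\bf x}\bigl(\Phi_{i_1,\cdots,i_d}({\bf x})-b\bigr)$. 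With $N^d$ multi-indices this forces $H_2\geq 2N^d$, exactly the stated bound. Taking $b\in[d-1,d)$ and setting the output neuron to
$$f_{2+\mathbb{M}}({\bf x}) := \sum_{i_1,\cdots,i_d=1}^{N} y_{i_1,\cdots,i_d}\,\frac{a(\Phi_{i_1,\cdots,i_d}({\bf x})-b)}{d-b}$$
completes the two-hidden-layer construction.

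I would then verify the two claims and extend the depth. For zero loss: at the node $(x_{1,i_1},\cdots,x_{d,i_d})$ the matching basis gives $\Phi_{i_1,\cdots,i_d}=d$, so (with $c$ large) $a(d-b)=d-b$ and that summand returns $y_{i_1,\cdots,i_d}$, while every other summand has $\Phi\leq d-1\leq b$, whence $a(\Phi-b)=0$; thus $f_{2+\mathbb{M}}$ interpolates all $N^d$ values and both cost functions vanish. For the failure claim, write $\eta=d-b$; then $\Phi_{i_1,\cdots,i_d}({\bf x})\geq b$ forces every factor $\phi_{j,i_j}\geq 1-\eta$, which (by the slope $1/h$ of the hat) confines ${\bf x}$ to a cube of edge $2\eta h$ about a single lattice point, so letting $b\to d$ shrinks these cubes to measure zero and any fixed ${\bf x}$ with $\text{dis}({\bf x},\Xi_{\bf x})>\epsilon$ gives $f_{2+\mathbb{M}}({\bf x})=0$. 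Finally, to reach depth $2+\mathbb{M}$ I would pass $f_2({\bf x})+\mathfrak{C}$ with $\mathfrak{C}>1$ through one neuron per added layer exactly as in Proposition \ref{p411}, since $\sigma$ is the identity on positive inputs, then subtract the constant at the end, leaving the function unchanged.

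The routine parts are the neuron bookkeeping and the lattice interpolation identity, both inherited from the ReLU proposition and from Proposition \ref{p411}. The step deserving the most care is confirming that $a(\cdot)$ genuinely truncates as in \eqref{prelu2I}: this needs the threshold $c$ to dominate $\max_{\bf x}(\Phi_{i_1,\cdots,i_d}({\bf x})-b)\leq d-b$ for \emph{every} multi-index simultaneously, so that each pair of Parametric ReLU neurons reproduces the ReLU-style clamp, making the factor-of-two neuron count both sufficient and, by this device, necessary.
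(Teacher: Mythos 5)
Your proposal is correct and follows essentially the same route as the paper: the same Parametric ReLU hat functions assembled into $\Phi_{i_1,\cdots,i_d}$, the same pairing of second-layer neurons to realize the clamp $a(\cdot)$ from \eqref{prelu2} (hence $H_2\geq 2N^d$), the identical output formula, and the same constant-shift trick from Proposition \ref{p411} for the extra layers; you in fact supply more detail than the paper's terse proof, e.g.\ the explicit interpolation check and the cube-of-edge-$2(d-b)h$ support estimate. The only trivial adjustment is that for a general bounded $g$ the shift constant $\mathfrak{C}$ must dominate $\sup_{\bf x}|f_2({\bf x})|$ rather than just $1$, which is clearly the intent of ``large enough.''
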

	\begin{proof}
		Let $h$ be $x_{j,i_j+1}-x_{j,i_j}$. We define the ``basis function" same as \eqref{ddhat2}.
		We need $2N^d$ neurons in second hidden layer, each pair is used to construct $a(\cdot)$, and the final output is:
		$$
		f_{2+\mathbb{M}}({\bf x}) := \sum_{i_1,i_2,\cdots,i_d=1}^{N}y_{i_1,i_2,\cdots,i_d}
		\frac{a(\Phi_{i_1,i_2,\cdots,i_d}({\bf x})-b)}
		{d-b},
		$$ 
		where $a(\cdot)$ is defined in \eqref{prelu2}, $c$ in $a(\cdot)$ should be large enough,
		$$
		y_{i_1,i_2,\cdots,i_d}=g(x_{1,i_1},x_{2,i_2},\cdots, x_{d,i_d}),
		$$
		and $b\in [d-1,d)$. So as $b\rightarrow d$, the approximation will fail, even though it is the global optimal solution. Deeper networks will fail in the same way as the 1-D case.
	\end{proof}
	
	\section{Constructions for Networks with Sigmoid Activation Functions}\label{sec5}
	In this section, we will show that to approximate functions using deep neural networks with sigmoid activation functions, there exist solutions that can be as close to the global optima as possible, but the approximation is still very poor. 
	
	Let $g({\bf x})$ be a continuous function, where ${\bf x}\in [0,1]^d.$ The training set $\Xi$ is defined in \eqref{Txy}. 
	The cost functions can be MSE or MAE errors. We follow the ideas proposed in Chapter 4 of \cite{nielsen2015neural}, which visually prove that a shallow neural network can compute any function. However, our purpose is to construct solutions that fit the training data very well but are only good approximations near the training points $\Xi_{\bf x}$. $\Xi_{\bf x}$ is defined in \eqref{Tx}. Then, we extend the results to deep networks by Theorem \ref{th2.1}.
	
	We begin by constructing the one-dimensional basis function $\phi(x)$, which is defined as
	\begin{equation}\label{phi_sig}
		\phi(x) = \frac{\sigma(Kx+1)-\sigma(Kx-1)}{\sigma(1)-\sigma(-1)}
	\end{equation}
	where $K>0$, $\sigma(x) = 1/(1+e^{-x})$ is the sigmoid function. From the definition, we know that $0<\phi(x)\leq 1$. It is symmetric and tends to a `spike' as $K$ increases, see Figure \ref{fig:sig}. 
	
	\begin{figure} 
		\centering
		\subfigure[][]{%
			\label{fig:siga}%
			\includegraphics[width=0.35\linewidth]{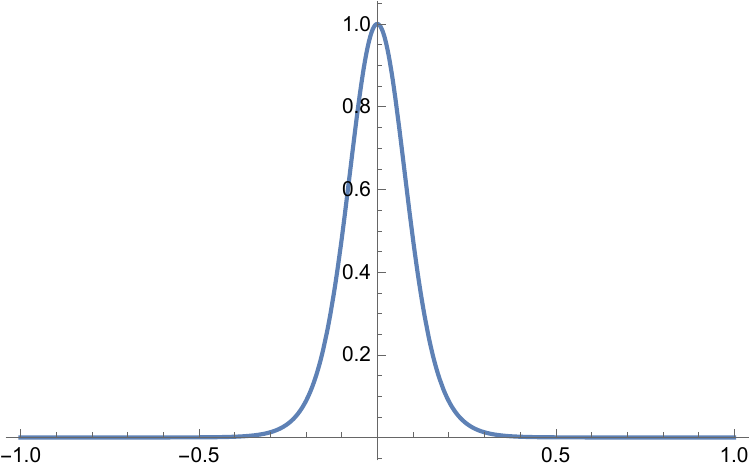}}%
		\hspace{8pt}%
		\subfigure[][]{%
			\label{fig:sigb}%
			\includegraphics[width=0.35\linewidth]{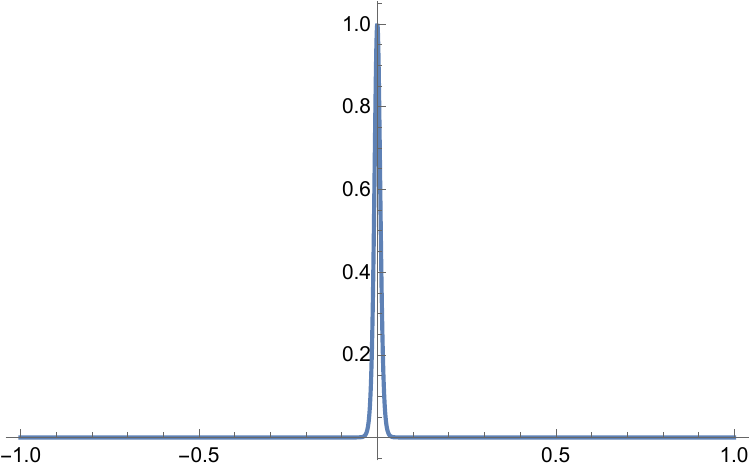}} 
		\caption[The basis fuction $\phi(x)$]{
			\subref{fig:siga} Graphs of $\phi(x)$ when $K=20$;
			\subref{fig:sigb} Graphs of $\phi(x)$ when $K=200$.
		}
		\label{fig:sig}%
	\end{figure}
	We then define a function $a(x)$, which is
	\begin{equation}\label{ax_sig}
		a(x) = \sigma(L x),
	\end{equation}
	where $L>0$. If we plug in a function $h(x)$ as $a(h(x))$, then for a large enough $L$, $a(\cdot)$ truncates the negative part of $h(x)$ and makes the positive part close to 1. Take $\phi(x)$ as an example, let $K=200$, we have Figure \ref{fig:sigL}. From Figures \ref{fig:sigLc} to \ref{fig:sigLd}, we observe that the ``compact support" of the truncated function $a((\phi(x)-b)/(1-b))$ shrinks as $b\rightarrow 1$. 
	\begin{figure} 
		\centering
		\subfigure[][]{%
			\label{fig:sigLc}%
			\includegraphics[width=0.35\linewidth]{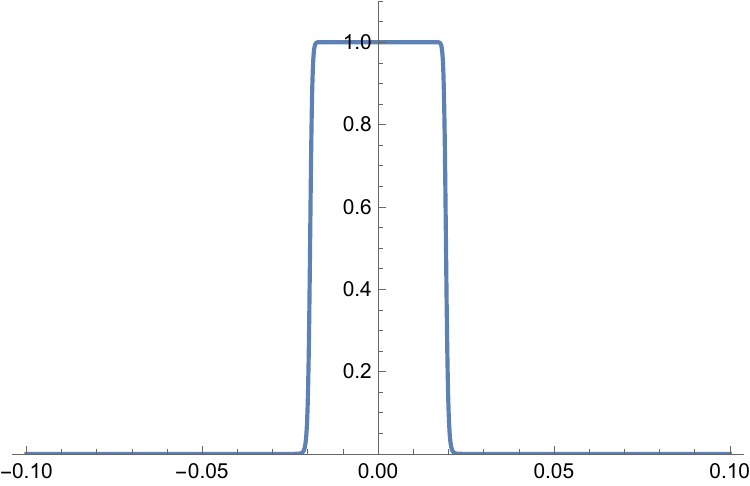}}%
		\hspace{8pt}%
		\subfigure[][]{%
			\label{fig:sigLd}%
			\includegraphics[width=0.35\linewidth]{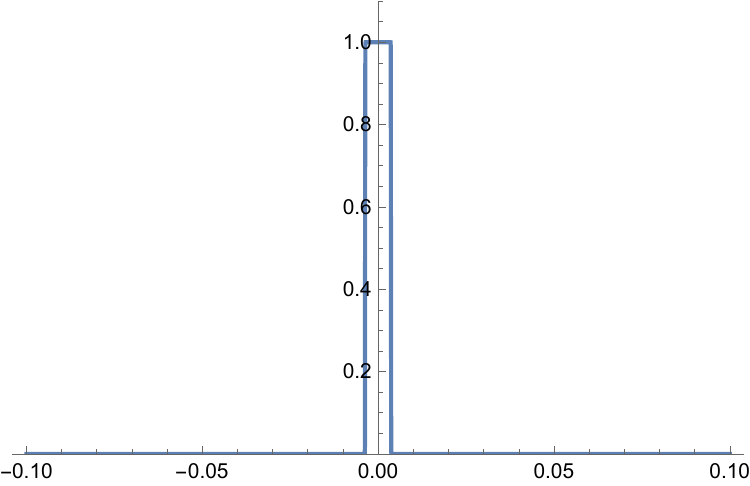}}%
		\caption[The basis fuction $\phi(x)$]{ Graphs of $a((\phi(x)-b)/(1-b))$ when $K=200$. 
			\subref{fig:sigLc} $L=150$, $b=0.1$;
			\subref{fig:sigLd} $L=150$, $b=0.9$.
		}
		\label{fig:sigL}%
	\end{figure}
	
	Similarly, we consider a fully connected neural network with $2+\mathbb{M}$ hidden layers and one output neuron that uses a linear activation function. The width of the first hidden layer is $H_1$,
	the width of the second hidden layer is $H_2$, and the width of the $(2 + k)$th hidden layer is $H_{2+k}$, where $1 \leq k \leq \mathbb{M}$. Here, $\mathbb{M}$ is a positive integer.
	\subsection{One-Dimensional Function Approximation}\label{1dsig_func}
	Suppose $d=1$. To approximate $g(x)$ on $[0,1]$ by the neural network, we define the training set $\{ x_i,y_i\}_{i=1}^{N},$ where $y_i=g(x_i)$.  
	\begin{prop}\label{prop_sig}
		If $H_1\geq 2N$, $H_2\geq N$, and $H_{2+k}\geq 1$, $1 \leq k \leq \mathbb{M}$, there exist weights and biases such that for any $\epsilon > 0$, the loss function is less than $\epsilon$. Meanwhile,  if $\text{dis}(x,\{x_i\}_{i=1}^{N}) > \epsilon$, then $|f_{2+\mathbb{M}}(x)| \leq \epsilon$, indicating if $g(x)\not= 0$, the approximation is poor.
	\end{prop}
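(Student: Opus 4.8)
The plan is to transplant the ReLU and Parametric ReLU function-approximation constructions into the sigmoid setting, using the spike basis $\phi$ from \eqref{phi_sig} and the truncation $a(x)=\sigma(Lx)$ from \eqref{ax_sig}. First I would build the first hidden layer with $2N$ neurons computing $\sigma(K(x-x_i)+1)$ and $\sigma(K(x-x_i)-1)$ for $i=1,\dots,N$; taking the scaled differences prescribed in \eqref{phi_sig} recovers the $N$ basis functions $\phi_i(x)=\phi(x-x_i)$ as linear combinations of the first-layer outputs. Each $\phi_i$ is a smooth spike with $\phi_i(x_i)=1$ and, crucially, $\phi_i(x)\to 0$ exponentially in $K$ once $|x-x_i|$ is bounded below.

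Next I would assign the second hidden layer $N$ neurons, feeding the $i$-th neuron the pre-activation $L(\phi_i(x)-b)/(1-b)$, so that after its sigmoid activation it outputs $a\bigl((\phi_i(x)-b)/(1-b)\bigr)$; the output neuron then forms
$$
f_2(x):=\sum_{i=1}^{N}y_i\,a\!\left(\frac{\phi_i(x)-b}{1-b}\right),\qquad b\in[0,1).
$$
The key algebraic observation is that at a training point $\phi_i(x_i)=1$ exactly, so $(1-b)/(1-b)=1$ independent of $b$, giving the $i$-th truncation argument the value $1$ and hence $a(1)=\sigma(L)$; while for $j\neq i$ the value $\phi_j(x_i)$ is exponentially small in $K$. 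Choosing $K$ large enough that $\phi_j(x_i)<b$ for all $j\neq i$, and then $L$ large, forces $\sigma(L)\to 1$ and $a\bigl((\phi_j(x_i)-b)/(1-b)\bigr)=\sigma(\text{negative}\cdot L)\to 0$, so that $f_2(x_i)\to y_i$ and the MAE/MSE loss drops below any prescribed $\epsilon$.

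For the poor-approximation claim I would use the same two estimates in reverse: if $\text{dis}(x,\{x_i\}_{i=1}^N)>\epsilon$, then every $\phi_i(x)$ is exponentially small in $K$, so for $K$ large relative to $1/\epsilon$ each argument $(\phi_i(x)-b)/(1-b)$ is negative, and taking $L$ large makes every $a(\cdot)$ small enough that $|f_2(x)|\le\epsilon$ (the case of all-zero targets being trivial). Finally, since $f_2$ is bounded, $|f_2|\le N\max_i|y_i|$, and $\sigma$ is $C^2$ with bounded second derivative and $\sigma'(c)\neq 0$ for every finite $c$, Theorem \ref{th2.1} lets me append $\mathbb{M}$ extra hidden layers of width $\ge 1$ while changing the output by at most an arbitrarily small amount, yielding $f_{2+\mathbb{M}}(x)\approx f_2(x)$ and preserving both conclusions.

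The main obstacle is that sigmoid outputs never attain $0$ or $1$, so—unlike the ReLU case—the loss cannot be made exactly zero, only arbitrarily small; the whole argument therefore hinges on choosing the three parameters in the right order. I would first fix $K$ (depending on $\epsilon$ and on the minimal gap $\min_{i\neq j}|x_i-x_j|$) so that the spikes are simultaneously disjoint at the training points and negligible at distance greater than $\epsilon$, then fix $L$ large to sharpen the truncation, treating $b$ as a free parameter in $[0,1)$ whose pushing toward $1$ only further shrinks the effective support. The delicate point is verifying that these choices satisfy all error bounds at once for a single $\epsilon$; this is clean precisely because $\phi_i(x_i)=1$ makes the on-sample value of each truncation argument exactly $1$, which decouples the loss estimate from the support-shrinking mechanism.
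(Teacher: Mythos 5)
Your construction is exactly the paper's: the same $2N$ first-layer neurons computing $\sigma(K(x-x_i)\pm 1)$, the same second-layer output $f_2(x)=\sum_i y_i\,a\bigl((\phi_i(x)-b)/(1-b)\bigr)$, and the same appeal to Theorem \ref{th2.1} to append the $\mathbb{M}$ extra layers; your quantitative ordering of $K$, then $L$ (with $b$ fixed in $(0,1)$, which is all the existence claim needs) just spells out details the paper leaves implicit. Correct, and essentially the same approach.
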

	\begin{proof}
		First, we prove the result for the network with two hidden layers. For any $x_i \in [0,1]$, we denote
		\begin{equation}\label{phi_i_sig}
			\phi_i(x) = \phi(x-x_i)
		\end{equation}
		where $\phi(x)$ is defined in \eqref{phi_sig}.
		For the first hidden layer, we employ $2N$ neurons. Each pair of the neurons has inputs $K(x-x_i)+1$ and $K(x-x_i)-1$, where $K>0$ and $i=1,2,\cdots,N$. Then using the output of the first hidden layer, we can build basis functions $\phi_i(x)$, $i=1,2,\cdots,N$.
		Then we use $N$ neurons in second hidden layer and the output is:
		\begin{equation*}
			f_2(x) := \sum_{i=1}^{N}y_i\,  a\left( \frac{\phi_i(x)-b}{1-b}\right),
		\end{equation*}
		where $b\in [0,1)$. $L$ in $a(\cdot)$ is large enough.  So as $b\rightarrow 1$ or $K\rightarrow \infty$, the approximation will fail even though the loss function can be made as small as possible. Then, we can add $\mathbb{M}$ extra hidden layers. By Theorem \ref{th2.1} and equations \eqref{p1}, \eqref{pN}, and \eqref{fkn}, we can get $f_{2+\mathbb{M}}(x) \rightarrow f_2(x)$ by adjusting the parameters in equation \eqref{p1}.
	\end{proof}
	Next, we present examples to illustrate Proposition \ref{prop_sig}.
	\begin{example}\label{ex5.1}
		Let $g(x) = \sin(\pi x),$ where $x\in[0,1]$. The training set is: 
		$$\Big\{x_i,y_i\Big\}_{i=1}^9 \text{ where } x_i=\frac{i-1}{8},\ y_i=\sin(\pi x_i).$$
		So that we can construct the solution for a network with two hidden layers:
		\begin{equation}\label{f2sig_sin}
			f_2(x) = \sum_{i=1}^{9}\sin(\pi x_i)\,  \sigma\left( L
			\frac{\sigma(K(x-x_i)+1)-\sigma(K(x-x_i)-1) -b(\sigma(1)-\sigma(-1))}{(\sigma(1)-\sigma(-1))(1-b)}
			\right),
		\end{equation}
		where $\sigma(x) = 1/(1+e^{-x}).$
	\end{example}
	In equation \eqref{f2sig_sin}, for fixed $K$ and $L$, let $b$ vary from 0.2 to 0.995, this results in Figure \ref{fig:sigLE}. For fixed $L$ and $b$, let $K$ vary from 100 to 1000, we have Figure \ref{fig:sigLEK}.  So that as $b\rightarrow 1$ or $K\rightarrow \infty$, the approximation gets worse. 
	The extreme case is that we only have good approximations near the points $\{x_i\}_{i=1}^9$. For any $\epsilon >0$, if $b$ is close to 1 or $K$ is large enough, meanwhile $L$ is large enough, then $|f_2(x_i)-\sin(\pi x_i)|<\epsilon, i=1,2,\cdots,9,$ and the approximation is poor if 
	$\text{dis}(x,\{x_i\}_{i=1}^{9}) > \epsilon.$
	
	\begin{figure} 
		\centering
		\subfigure[][]{%
			\label{fig:sigLaE}%
			\includegraphics[width=0.35\linewidth]{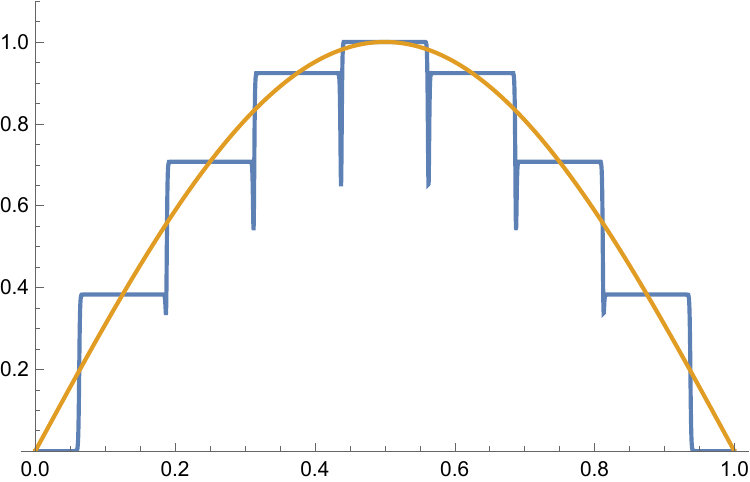}}%
		\hspace{8pt}%
		\subfigure[][]{%
			\label{fig:sigLbE}%
			\includegraphics[width=0.35\linewidth]{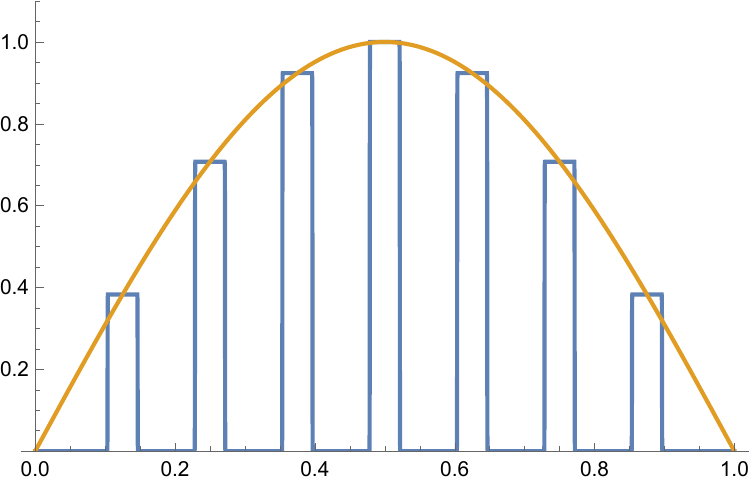}} 
		\subfigure[][]{%
			\label{fig:sigLcE}%
			\includegraphics[width=0.35\linewidth]{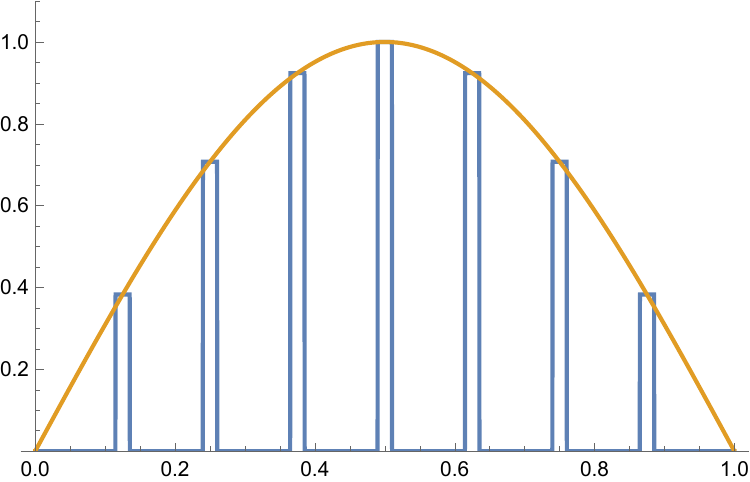}}%
		\hspace{8pt}%
		\subfigure[][]{%
			\label{fig:sigLdE}%
			\includegraphics[width=0.35\linewidth]{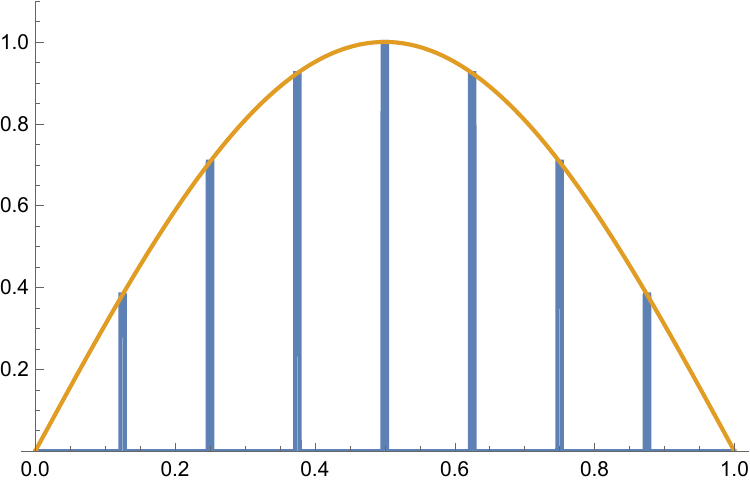}}%
		\caption[The basis fuction $\phi(x)$]{ Graphs of $f_2(x)$ in \eqref{f2sig_sin} (blue) and $\sin(\pi x)$ (yellow) when $K=50$, $L=150$. 
			\subref{fig:sigLaE} $b=0.2$;
			\subref{fig:sigLbE} $b=0.8$;
			\subref{fig:sigLcE} $b=0.95$;
			\subref{fig:sigLdE} $b=0.995$.
		}
		\label{fig:sigLE}%
	\end{figure}
	
	\begin{figure}
		\centering
		\subfigure[][]{%
			\label{fig:sigLaEK}%
			\includegraphics[width=0.35\linewidth]{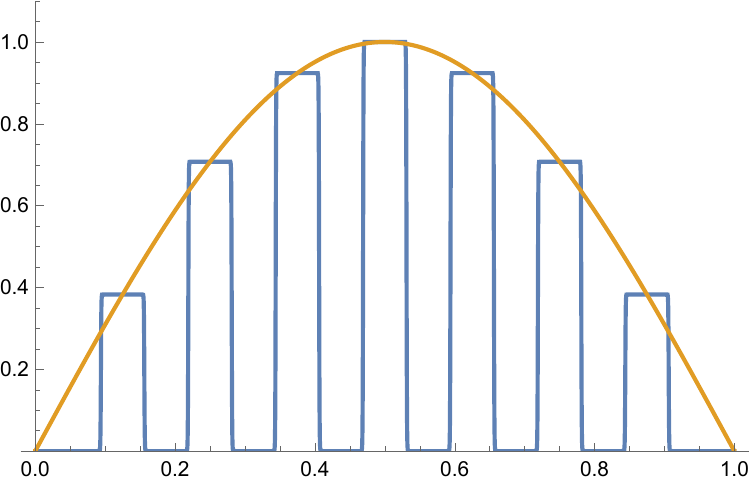}}%
		\hspace{8pt}%
		\subfigure[][]{%
			\label{fig:sigLbEK}%
			\includegraphics[width=0.35\linewidth]{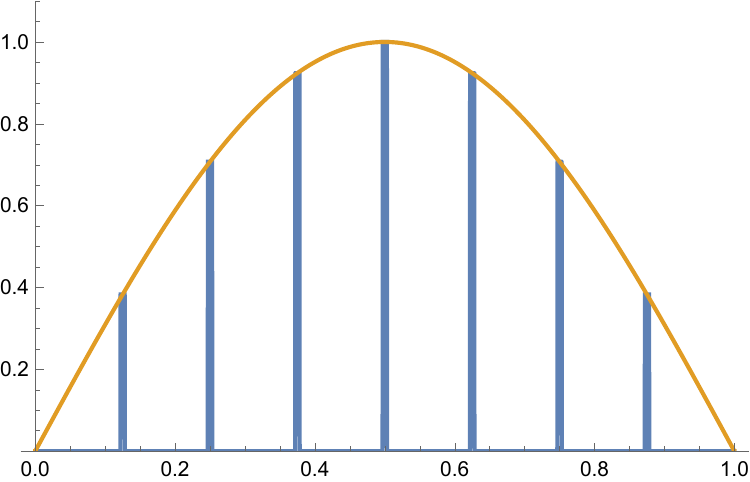}}
		\caption[The basis fuction $\phi(x)$]{ Graphs of \eqref{f2sig_sin} (blue) and $\sin(\pi x)$ (yellow) when $L=150$, $b=0.2$. 
			\subref{fig:sigLaEK} $K=100$;
			\subref{fig:sigLbEK} $K=1000$.
		}
		\label{fig:sigLEK}%
	\end{figure}
	
	\begin{example}\label{ex5.2}
		Based on Example \ref{ex5.1}, we can add $\mathbb{M}$ extra hidden layers to the network and construct 
		$f_{2+\mathbb{M}}(x)$, which is close to $f_2(x)$. Let $c=0$ and $a(\cdot)$ be $\sigma(x)$  in \eqref{p1}-\eqref{fkn}, we have $f_{2+\mathbb{M}}(x)$ as follows:
		\begin{align}
			&p_1 = \epsilon f_{2}(x), \label{p1_sig}\\
			&p_n = 4\,\sigma(p_{n-1})-2, \ n\geq 2, \\
			&f_{2+\mathbb{M}}(x) = \frac{4}{\epsilon}\sigma (p_\mathbb{M})-\frac{2}{\epsilon},\label{pN_sig}
		\end{align}
		where $\epsilon>0$ and $\sigma(x) = 1/(1+e^{-x})$.
	\end{example}
	Let $\mathbb{M}$ be 6, so we have six more hidden layers. The graphs of $f_{8}(x)$ and $f_2(x)$ are given in Figure \ref{fig:sigLaM}, which are very close to each other. From Figure \ref{fig:sigLbM} to Figure \ref{fig:sigLdM}, we can see that as $\epsilon$ in \eqref{p1_sig}-\eqref{pN_sig} decreases from 0.1 to 0.001, the error between $f_{8}(x)$ and $f_{2}(x)$ decreases. This example also verifies Theorem \ref{th2.1}.
	\begin{figure}
		\centering
		\subfigure[][]{%
			\label{fig:sigLaM}%
			\includegraphics[width=0.35\linewidth]{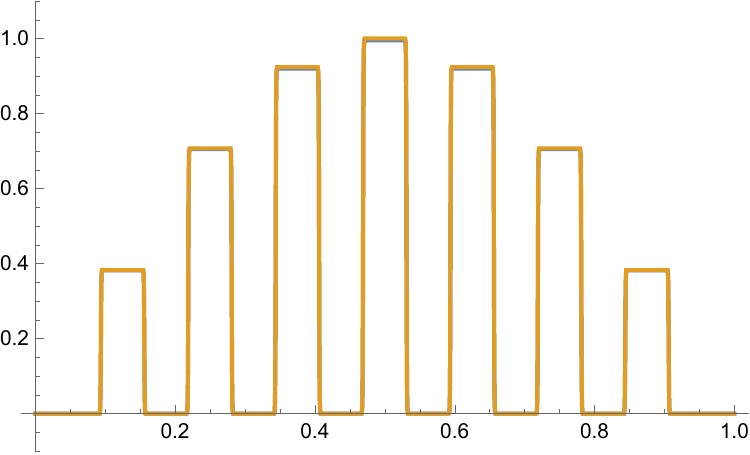}}%
		\hspace{8pt}%
		\subfigure[][]{%
			\label{fig:sigLbM}%
			\includegraphics[width=0.35\linewidth]{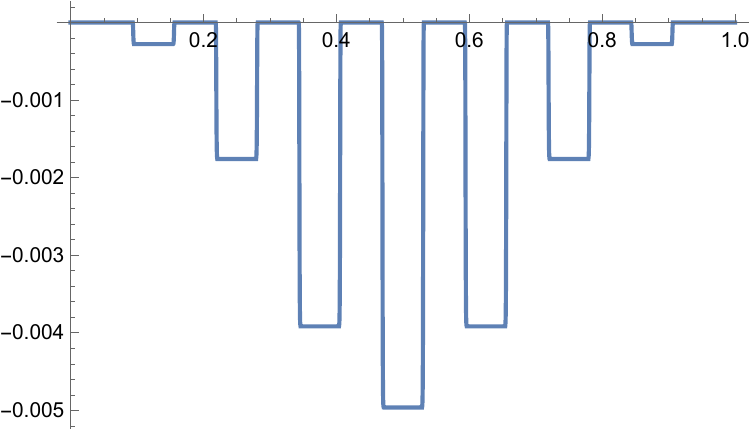}} %
		\subfigure[][]{%
			\label{fig:sigLcM}%
			\includegraphics[width=0.35\linewidth]{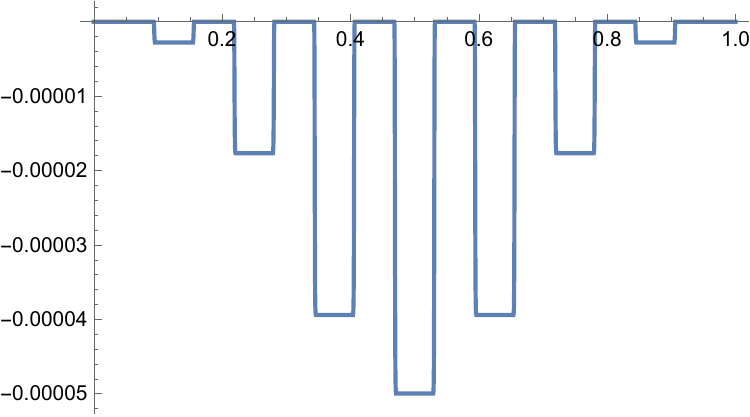}}%
		\hspace{8pt}%
		\subfigure[][]{%
			\label{fig:sigLdM}%
			\includegraphics[width=0.35\linewidth]{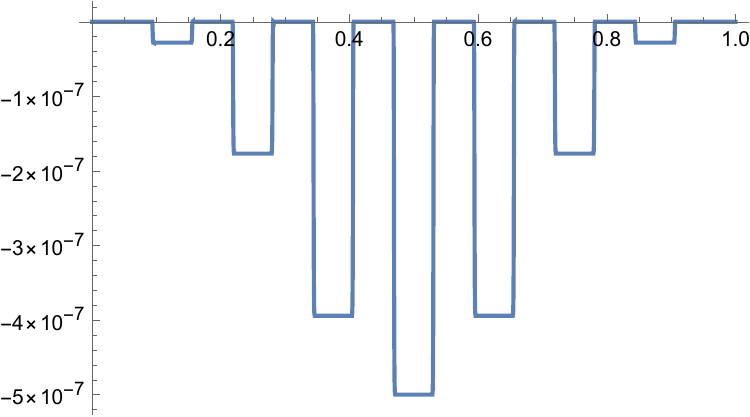}}%
		\caption[The basis fuction $\phi(x)$]{  
			\subref{fig:sigLaM} Graphs of $f_{8}(x)$ (blue) and $f_{2}(x)$ (yellow) when $\epsilon=0.1$;
			\subref{fig:sigLbM} Graph of $f_{8}(x)-f_{2}(x)$ when $\epsilon=0.1$;
			\subref{fig:sigLcM} Graph of $f_{8}(x)-f_{2}(x)$ when $\epsilon=0.01$;
			\subref{fig:sigLdM} Graph of $f_{8}(x)-f_{2}(x)$ when $\epsilon=0.001$.
		}
		\label{fig:sigLM}%
	\end{figure}
	
	\subsection{High-Dimensional Function Approximation}
	We employ the same settings and notations as in Section \ref{hd-relu} for function approximation; however, the network uses sigmoid activation functions. 
	
	\begin{prop}
		If $H_1\geq 2dN$, $H_2\geq N^d$,  and $H_{2+k}\geq 1$, $1 \leq k \leq \mathbb{M}$, there exist weights and biases such that for any $\epsilon > 0$, the loss function is less than $\epsilon$. Meanwhile,  if $\text{dis}({\bf x},\Xi_{\bf x}) > \epsilon$, then $|f_{2+\mathbb{M}}({\bf x})| \leq \epsilon$, indicating if $g({\bf x})\not= 0$, the approximation is poor.
	\end{prop}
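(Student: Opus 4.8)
The plan is to mirror the one-dimensional argument of Proposition \ref{prop_sig} coordinate by coordinate and then glue the coordinates together with the additive $d$-dimensional ``basis function'' of the form \eqref{ddhat2}, now built from the sigmoid spike \eqref{phi_sig}, using the sigmoid cut-off $a(\cdot)$ of \eqref{ax_sig} to localize. Concretely, for each grid index tuple $(i_1,\dots,i_d)$ I would set $\Phi_{i_1,\dots,i_d}(\mathbf{x})=\sum_{j=1}^d \phi_{j,i_j}(x_j)$ with $\phi_{j,i_j}(x_j)=\phi(x_j-x_{j,i_j})$, so that $\Phi_{i_1,\dots,i_d}$ has maximum height $d$, attained only when \emph{every} coordinate sits at its center, and take the two-hidden-layer output
$$
f_2(\mathbf{x}) := \sum_{i_1,\dots,i_d=1}^N y_{i_1,\dots,i_d}\, a\!\left(\frac{\Phi_{i_1,\dots,i_d}(\mathbf{x})-b}{d-b}\right),\qquad b\in[d-1,d),\ a(x)=\sigma(Lx).
$$

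To realize this at the stated widths, the first hidden layer uses $2dN$ neurons: for each dimension $j$ and index $i_j$ a pair computes $\sigma(K(x_j-x_{j,i_j})+1)$ and $\sigma(K(x_j-x_{j,i_j})-1)$, from which every $\phi_{j,i_j}$ is a fixed linear combination. The second hidden layer uses one neuron per grid point, that is $N^d$ neurons, whose pre-activation is the linear combination $(\Phi_{i_1,\dots,i_d}(\mathbf{x})-b)/(d-b)$ of first-layer outputs; applying $a$ and reading off with output weights $y_{i_1,\dots,i_d}$ produces $f_2$. This accounts for $H_1\ge 2dN$ and $H_2\ge N^d$.

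The verification splits into a fit estimate and a localization estimate, both driven by the single inequality $\sum_{j}(1-\phi_{j,i_j})<d-b$. At a training point $p^\star=(x_{1,i_1^\star},\dots,x_{d,i_d^\star})$ the diagonal term has argument $1$ and contributes $y_{p^\star}\,\sigma(L)\to y_{p^\star}$; any off-diagonal tuple has $i_{j_0}\neq i_{j_0}^\star$ for some $j_0$, so the factor $\phi(x_{j_0,i_{j_0}^\star}-x_{j_0,i_{j_0}})\to 0$ as $K\to\infty$ (the spike is evaluated at distance $\ge h$), whence $\Phi_{i_1,\dots,i_d}(p^\star)<b$ once $b>d-1$, and the cut-off drives that term to $0$ as $L\to\infty$. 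Summing the $N^d$ resulting tiny errors keeps the MSE or MAE loss below $\epsilon$. For localization, the region $\{\Phi_{i_1,\dots,i_d}>b\}$ forces $\sum_{j}(1-\phi_{j,i_j})<d-b\le 1$, hence every $x_j$ close to $x_{j,i_j}$, so each bump is confined to a box about its center whose radius shrinks as $b\to d$ and $K\to\infty$; choosing parameters so these radii are $<\epsilon$ makes $f_2(\mathbf{x})$ negligible whenever $\text{dis}(\mathbf{x},\Xi_{\mathbf{x}})>\epsilon$.

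Finally I would invoke Theorem \ref{th2.1}: since $f_2$ is bounded, attaching $\mathbb{M}$ layers of width $\ge 1$ yields $f_{2+\mathbb{M}}\to f_2$ uniformly, preserving both estimates up to a further $\epsilon$. The main obstacle is coordinating the three limiting parameters in the right order: one must first fix $b\in(d-1,d)$, then send $K\to\infty$ to annihilate the off-diagonal spike values and shrink supports below $\epsilon$, and only then take $L$ large enough that the cut-off is sharp on the resulting scales; performing these in the wrong order leaves near-threshold values $\sigma(L\cdot(\text{small}))\approx 1/2$ that pollute either the fit or the localization. The genuinely $d$-dimensional point, absent in Proposition \ref{prop_sig}, is that the additive basis already equals about $d-1$ whenever only a single coordinate strays, so the cut-off level must sit \emph{strictly} above $d-1$ — which is exactly what $b\in[d-1,d)$ used with $b>d-1$ encodes.
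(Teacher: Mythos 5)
Your proposal matches the paper's proof essentially step for step: the same additive sigmoid-spike basis $\Phi_{i_1,\dots,i_d}$ built from pairs $\sigma(K(x_j-x_{j,i_j})\pm 1)$ (accounting for $H_1\geq 2dN$), the same output $\sum y_{i_1,\dots,i_d}\,a\bigl((\Phi_{i_1,\dots,i_d}-b)/(d-b)\bigr)$ with one second-layer neuron per grid point, and the same deep extension via Theorem \ref{th2.1} and equations \eqref{p1_sig}--\eqref{pN_sig}. Your added discussion of the fit/localization estimates and the order in which $b$, $K$, $L$ must be taken is a correct elaboration of details the paper leaves implicit, not a different route.
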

	\begin{proof}
		We define $\phi_{i,i_j}$ same as \eqref{phi_j_ij}, but with $\phi(x)$ in \eqref{phi_sig},
		and the ``basis function": $\Phi_{i_1,i_2,\cdots,i_d}({\bf x})$ same as \eqref{ddhat2}.
		We need $2dN$ neurons in the first hidden layer. In the $j$th dimension, $j=1,2,\cdots, d$, we use $2N$ neurons, with each pair used to construct $\phi_{j,i_j}(x_j),$ where $x_j$ is the variable, $i_j=1,2,\cdots,N$. We need $N^d$ neurons in the second hidden layer. The output of the second hidden layer is:
		$$
		f_{2}({\bf x}) := \sum_{i_1,i_2,\cdots,i_d=1}^{N}y_{i_1,i_2,\cdots,i_d}\,
		a\left(\frac{\Phi_{i_1,i_2,\cdots,i_d}({\bf x})-b}
		{d-b}\right),
		$$ 
		where $a(\cdot)$ is defined in \eqref{ax_sig}, $L$ in $a(\cdot)$ should be large enough,
		$$
		y_{i_1,i_2,\cdots,i_d}=g(x_{1,i_1},x_{2,i_2},\cdots, x_{d,i_d}),
		$$
		and $b\in [d-1,d)$. 
		Then follow equations  \eqref{p1_sig}-\eqref{pN_sig}, we can get $f_{2+\mathbb{M}}({\bf x})$.
		So as $b\rightarrow d$ or $K\rightarrow \infty$, the approximation will fail, even though it is close to the global optimal solution. 
	\end{proof}
	
	\section{Conclusions}\label{sec6}
	We proposed a simple remedy to extend the universal approximation of shallow neural networks to any depth. The technique also works for vector-valued function approximation. However, if the dimension of the vector-valued function is $V$ (where $V$ is a positive integer), then for additional hidden layers, each layer should have at least $V$ neurons.
	The examples in Section \ref{sec3} to Section \ref{sec5} serve as extremely overfitting cases for fully connected deep neural networks. They are not practically useful but can help us understand the overfitting phenomenon and global optima theoretically. The example in Section \ref{sec_img} contradicts the common observation that to overfit, the number of parameters must significantly exceed the training data size. Binary classification examples can also be constructed for networks with sigmoid functions. Although at points not close to the training data, the output values of the network are not strictly zero, they are so small that they are not significant enough to be classified. We will extend the analysis in this paper to Recurrent Neural Networks and Convolutional Neural Networks.

	
	\bibliographystyle{plain}
	\bibliography{ref}

\end{document}